\newtheorem{faulse assertion}{faulse assertion}
\begin{document}

\title{A necessary and sufficient condition for two relations to induce the same definable set family}         

\author{Hua Yao, William Zhu\thanks{Corresponding author.
E-mail: williamfengzhu@gmail.com(William Zhu)} }
\institute{Lab of Granular Computing,\\
Minnan Normal University, Zhangzhou 363000, China}



\date{\today}          
\maketitle

\begin{abstract}
In Pawlak rough sets, the structure of the definable set families is simple and clear, but in generalizing rough sets, the structure of the definable set families is a bit more complex. There has been much research work focusing on this topic. However, as a fundamental issue in relation based rough sets, under what condition two relations induce the same definable set family has not been discussed. In this paper, based on the concept of the closure of relations, we present a necessary and sufficient condition for two relations to induce the same definable set family.
\newline
\textbf{Keywords.} Relation; Definable set; Closure of relations.
\end{abstract}

\section{Introduction}
Rough set theory has been proposed by Pawlak~\cite{Pawlak82Rough,Pawlak91Rough} as a tool to conceptualize, organize and analyze various
types of data in data mining. This method is especially useful for dealing with uncertain and vague knowledge
in information systems.
In theory, rough sets have been connected with
fuzzy sets~\cite{DuboisPrade90Rough,KazanciYamakDavvaz08TheLower,WuLeungMi05OnCharacterizations,Yao98OnGeneralizing},
lattices~\cite{Dai05Logic,EstajiHooshmandaslDavvaz12Roughappliedtolattice,Liu08Generalized,WangZhu13Quantitative}, hyperstructure theory~\cite{YamakKazanciDavvaz11Softhyperstructure},
matroids~\cite{HuangZhu12Geometriclattice,LiuZhu12Matroidal,TangSheZhu12matroidal,WangZhuZhuMin12matroidalstructure},
topology~\cite{Kondo05OnTheStructure,LashinKozaeKhadraMedhat05Rough,Zhu07Topological}, and so on.

Rough set theory is built on equivalence relations or partitions,
but equivalence relations and partitions are too restrictive for many applications.
To address this issue, several meaningful extensions of Pawlak rough sets have been proposed.
On one hand, Zakowski~\cite{Zakowski83Approximations} has used coverings to establish covering based rough set theory.
Many scholars~\cite{BonikowskiBryniarskiWybraniecSkardowska98Extensions,Bryniarski89ACalculus,ChenZhangYeungTsang06Rough,Pomykala87Approximation,Pomykala88Ondefinability,ZhuWang03Reduction} have done deep researches on this theory.
Recently, covering based rough set theory gained some new development~\cite{MuratDiker2012Textures,DuHuZhuMa11Rule,ZhanhongShiZengtaiGong2010Thefurther,TianYangQingguoLiBileiZhou2013Relatedfamily,YiyuYaoBingxueYao2012Coveringbased,YanlanZhangMaokangLuo2013Relationshipsbetweencovering}. On the other hand, Pawlak rough set theory has been extended to similarity relation based rough sets \cite{SlowinskiVanderpooten00AGeneralized}, tolerance relation based rough
sets \cite{SkowronStepaniuk96tolerance} and arbitrary binary relation based rough sets \cite{LiuZhu08TheAlgebraic,Yao98Constructive,Yao98OnGeneralizing,Zhu07Generalized}.

A definable set is a fundamental concept in various types of rough sets. Many studies have been done on this topic. Since the structure of the definable set families is simple and clear in Pawlak rough sets, these studies mainly focused on generalizing rough sets. Yang and Xu \cite{yang2009algebraic} studied the definable set families of relation based rough sets from algebraic aspects. In \cite{ge2011definable}, Ge and Li investigated definable sets of ten types of covering based rough sets. Pei \cite{pei2007definable} investigated the mathematical structure of the set of definable concepts in several generalized
rough set models and discussed the relationship between different rough set models. In \cite{ali2012some}, Ali et al. investigated the topological structures associated with definable sets in the generalized approximation space $(X,Y,T)$. Liu and Zhu \cite{LiuZhu08TheAlgebraic} presented the necessary and sufficient condition for definable set families to be nonempty and extended the concept of definable set. However, as a fundamental issue in relation based rough sets, under what condition two relations induce the same definable set family has not been discussed.

In this paper, based on the concept of the closure of relations, we present a necessary and sufficient condition for two relations to induce the same definable set family. First, we introduce some definitions and results of relation closures. Secondly, we study further some properties of definable sets. Thirdly, we simplify the expression of equivalent closures under certain conditions and prove that serial relations satisfy these conditions. Finally, we prove that a serial relation and its equivalent closure induce the same definable set family, and based on this, we present a necessary and sufficient condition for two relations to induce the same definable set family.

The remainder of this paper is organized as follows.
In Section~\ref{S:Preliminaries}, we review the relevant concepts and introduce some existing results, which include relations, relation based rough sets and relation closures.
In Section~\ref{The fundamental properties of definable sets}, we investigate some fundamental properties of definable sets. In Section~\ref{Simplification of equivalent closures under certain conditions}, we simplify the expression of equivalent closures under certain conditions.
In Section~\ref{A necessary and sufficient condition for two relations to induce the same definable set family}, we present a necessary and sufficient condition for two relations to induce the same definable set family. Section~\ref{S:Conclusions} concludes this paper.

\section{Preliminaries}
\label{S:Preliminaries}

In this section, we recall some basic concepts of relations, relation based rough sets and closures of relations. In this paper, we denote $\cup_{X\in S}X$ by $\cup S$, where $S$ is a set family. The fact that $A\subseteq B$ and $A\neq B$ is denoted by $A\subset B$. We denote the set of positive integers by $N^{+}$.

\subsection{Relation and relation based rough sets}
\label{Relation and relation based rough sets}

Relations, especially binary relations, are a basic concept in set theory. They play an important role in rough set theory
as well.

\begin{definition}(Relation)
\label{definitionA3}
Let $U$ be a set. Any $R\subseteq U\times U$ is called a binary relation on $U$.
If $(x,y)\in R$, we say $x$ has relation $R$ with $y$, and denote this relationship as $xRy$.
\end{definition}

Throughout this paper, a binary relation is simply called
a relation. On the basis of relations, we introduce the concepts of successor neighborhood and predecessor neighborhood.

\begin{definition}(Successor neighborhood and predecessor neighborhood)
\label{definition2}
Let $R$ be a relation on $U$ and $x\in U$. The successor neighborhood and predecessor neighborhood of $x$ are defined as $S_{R}(x)=\{y|xRy\}$ and $P_{R}(x)=\{y|yRx\}$, respectively.
\end{definition}

It is obvious $y\in S_{R}(x)\Leftrightarrow x\in P_{R}(y)$. Below we introduce some special relations.

\begin{definition}(Reflective, symmetric, transitive and serial relation)
\label{definitionA4}
Let $R$ be a relation on $U$. If for any $x\in U$, $xRx$, we say $R$ is reflective. If for any $x,y\in U$, $xRy$ implies $yRx$, we say $R$ is symmetric. If for any $x,y,z\in U$, $xRy$ and $yRz$ imply $xRz$, we say $R$ is transitive. If for any $x\in U$, there exists some $y\in U$ such that $xRy$, we say $R$ is serial.
\end{definition}

If $R$ is serial, we have that for any $x\in U$, it follows that $S_{R}(x)\neq\emptyset$.
Among various types of relations, there is an important type of relations called equivalent relations.

\begin{definition}(Equivalence relation \cite{GengQuWang2002discretemathematics})
\label{definitionA5}
Let $R\subseteq A\times A$ and $A\neq\emptyset$. If $R$ is reflective, symmetric and transitive, we say $R$ is an equivalence relation on $A$.
\end{definition}

Based on the concept of equivalence relation, we introduce the concept of equivalence class.

\begin{definition}(Equivalence class \cite{GengQuWang2002discretemathematics})
\label{definitionA6}
Let $R$ be an equivalence relation on a nonempty set $A$. We denote $S_{R}(x)$ as $[x]_{R}$ and call it the equivalence class of $x$ with respect of $R$.
\end{definition}

Given an equivalence relation, we can define a set family named quotient set.

\begin{definition}(Quotient set \cite{GengQuWang2002discretemathematics})
\label{definitionA7}
Let $R$ be an equivalence relation on a nonempty set $A$. We define the quotient set of $A$ with respect of $R$ as $A/R=\{[x]_{R}|x\in A\}$.
\end{definition}

There exists a concept called partition, which is closely related to the concept of equivalence relation.

\begin{definition}(Partition \cite{GengQuWang2002discretemathematics})
\label{definitionA2}
Let $A$ be a nonempty set and $P$ be a family of subsets of $A$.
$P$ is called a partition on $A$ if the following conditions hold: $(1)$ $\emptyset\notin P$; $(2)$
$\cup P=U$; $(3)$ for any $K,L\in P$, if $K\neq L$, $K\cap L=\emptyset$.
\end{definition}

The following theorem presents the relationship between partitions and equivalence relations.

\begin{theorem}(\cite{GengQuWang2002discretemathematics})
\label{theoremA7}
Let $A$ be a nonempty set. Then\\
$(1)$ If $R$ is an equivalence relation on $A$, $A/R$ is a partition on $A$;\\
$(2)$ If $P$ is a partition on $A$, $\{(x,y)|\exists K\in P(\{x,y\}\subseteq K)\}$ is an equivalence relation on $A$.
\end{theorem}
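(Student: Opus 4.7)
The plan is to prove the two parts separately, each by a direct check of the defining conditions (a partition in part (1), an equivalence relation in part (2)), relying on the definitions of $[x]_R$, $A/R$ and of a partition given just above.

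For part (1), fix an equivalence relation $R$ on $A$ and verify the three conditions of Definition A2 for the family $A/R=\{[x]_R \mid x \in A\}$. First, reflexivity of $R$ gives $x \in [x]_R$, so no equivalence class is empty, establishing $\emptyset \notin A/R$. Second, the same observation together with $[x]_R \subseteq A$ immediately yields $\cup A/R = A$. The only step requiring genuine work is the pairwise disjointness of distinct classes, which I would prove by contrapositive: if $z \in [x]_R \cap [y]_R$, then $xRz$ and $yRz$, so symmetry and transitivity give $xRy$, and a standard double-inclusion argument using transitivity in both directions then shows $[x]_R = [y]_R$.

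For part (2), set $R_P = \{(x,y) \mid \exists K \in P(\{x,y\} \subseteq K)\}$ and check reflexivity, symmetry and transitivity in turn. Reflexivity follows because $\cup P = A$ forces each $x \in A$ to lie in some block $K \in P$, whence $\{x,x\} \subseteq K$ and so $xR_Px$. Symmetry is immediate from the fact that the membership condition $\{x,y\} \subseteq K$ is symmetric in $x$ and $y$. For transitivity, if $xR_Py$ and $yR_Pz$ then there exist $K_1,K_2 \in P$ with $\{x,y\} \subseteq K_1$ and $\{y,z\} \subseteq K_2$; since $y \in K_1 \cap K_2$, condition (3) of a partition forces $K_1 = K_2$, and hence $\{x,z\} \subseteq K_1$, giving $xR_Pz$.

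The argument contains no deep obstacle; the only places calling for slight care are the disjointness step in (1), where one exploits the interplay of symmetry and transitivity, and the transitivity step in (2), where one must invoke the pairwise disjointness of blocks to merge $K_1$ and $K_2$ through the shared element $y$. Everything else is a direct unwinding of definitions.
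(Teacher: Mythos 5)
Your proof is correct and complete; the paper itself gives no proof of this theorem, simply citing it from the textbook \cite{GengQuWang2002discretemathematics}, and your argument is the standard one. Both delicate points — the disjointness of distinct classes via symmetry and transitivity, and the merging of the two blocks through the shared element $y$ in the transitivity step — are handled properly.
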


This theorem indicates that there is an one-to-one mapping between all the equivalence relations on a nonempty set $A$ and all the partitions on $A$. Pawlak rough sets have been extended to various types of generalizing rough sets. This paper studies relation based rough sets.

\begin{definition}(Rough set based on a relation~\cite{Yao98Constructive})
\label{definition3}
Suppose $R$ is a relation on a universe $U$. A pair of
approximation operators, $\underline{R}$, $\overline{R}$: $P(U)\rightarrow P(U)$, are defined by
\begin{center}
$\underline{R}(X)=\{x|S_{R}(x)\subseteq X\}$ and $\overline{R}(X)=\{x|S_{R}(x)\cap X\neq\emptyset\}$.
\end{center}
They are called the lower approximation operation and the upper approximation
operation of $X$, respectively.
\end{definition}

It is obvious $\underline{R}(X)=\{x|\forall y(xRy\rightarrow y\in X)\}$ and $\overline{R}(X)=\{x|\exists y(xRy\wedge y\in X)\}$. When $R$ is an equivalence relation on $U$, the above two approximation operations are called Pawlak approximation operations. The lower approximation operators and the upper approximation
operators have the following properties.

\begin{proposition}(\cite{Yao98Constructive})
\label{proposition4}
(1) $\overline{R}(X)=-\underline{R}(-X)\}$; (2) $\underline{R}(X)=-\overline{R}(-X)\}$.
\end{proposition}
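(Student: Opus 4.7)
The plan is to prove both parts directly from the defining set-builder formulas of $\underline{R}$ and $\overline{R}$, using only elementary complement manipulations. Part (2) will then follow from part (1) by a substitution trick, so the real work is confined to part (1).

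For part (1), I would start by expanding $-\underline{R}(-X)$ using the definition of $\underline{R}$: by Definition~\ref{definition3}, $\underline{R}(-X) = \{x \mid S_R(x) \subseteq -X\}$, so its complement in $U$ is $\{x \mid S_R(x) \not\subseteq -X\}$. The next step is to translate the condition $S_R(x) \not\subseteq -X$ into an existential statement: it holds iff there is some $y \in S_R(x)$ with $y \notin -X$, i.e., with $y \in X$. That is exactly $S_R(x) \cap X \neq \emptyset$, so the set becomes $\{x \mid S_R(x) \cap X \neq \emptyset\} = \overline{R}(X)$, which finishes (1).

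For part (2), rather than repeat the argument I would simply substitute $-X$ for $X$ in the identity proved in (1) to obtain $\overline{R}(-X) = -\underline{R}(-(-X)) = -\underline{R}(X)$, and then take complements of both sides to conclude $\underline{R}(X) = -\overline{R}(-X)$. This uses only the involutive property $-(-X)=X$.

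There is really no main obstacle here; the proof is a routine unfolding of definitions combined with the De Morgan-style equivalence between ``$S_R(x)$ is not contained in $-X$'' and ``$S_R(x)$ meets $X$''. The only point worth being careful about is making sure complements are taken within the fixed universe $U$, so that $-(-X) = X$ is genuinely an equality of subsets of $U$; once that is fixed the two statements are formally dual, and deriving (2) from (1) avoids any duplicated bookkeeping.
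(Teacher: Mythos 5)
Your proof is correct: the unfolding of $-\underline{R}(-X)$ via the negation of $S_R(x)\subseteq -X$ into $S_R(x)\cap X\neq\emptyset$ establishes (1), and deriving (2) by substituting $-X$ and complementing is valid since complements are taken within the fixed universe $U$. The paper simply cites this duality from Yao's work without proof, and your argument is the standard one that would be given there.
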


When $R$ is serial, the lower approximation operators and the upper approximation operators have the following property.

\begin{proposition}(\cite{Yao98Constructive})
\label{proposition3A}
Let R be a serial relation on $U$. For any $X\subseteq U$, it follows that $\underline{R}(X)\subseteq\overline{R}(X)$.
\end{proposition}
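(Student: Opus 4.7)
The plan is to argue directly from the definitions of the two approximation operators in Definition~\ref{definition3}. I would take an arbitrary $x \in \underline{R}(X)$ and aim to show $x \in \overline{R}(X)$. Unfolding the definitions, $x \in \underline{R}(X)$ means $S_{R}(x) \subseteq X$, while $x \in \overline{R}(X)$ requires $S_{R}(x) \cap X \neq \emptyset$. So the task reduces to producing at least one element that belongs to $S_{R}(x) \cap X$.

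This is exactly the point at which seriality enters. Since $R$ is serial, the observation immediately following Definition~\ref{definitionA4} guarantees that $S_{R}(x) \neq \emptyset$ for every $x \in U$. Picking any $y \in S_{R}(x)$, the inclusion $S_{R}(x) \subseteq X$ obtained in the previous paragraph forces $y \in X$, hence $y \in S_{R}(x) \cap X$. Therefore $S_{R}(x) \cap X \neq \emptyset$, which gives $x \in \overline{R}(X)$, and since $x$ was arbitrary we conclude $\underline{R}(X) \subseteq \overline{R}(X)$.

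There is no real obstacle here; the argument is a two-line chain from the definitions. The only conceptual point worth highlighting is that seriality is precisely what bridges the universal quantifier hidden in $\underline{R}(X) = \{x \mid \forall y(xRy \rightarrow y \in X)\}$ and the existential quantifier in $\overline{R}(X) = \{x \mid \exists y(xRy \wedge y \in X)\}$. Without it the inclusion fails already at $X = \emptyset$, since any $x$ with $S_{R}(x) = \emptyset$ lies in $\underline{R}(\emptyset)$ but cannot lie in $\overline{R}(\emptyset)$.
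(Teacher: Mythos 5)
Your proof is correct and is the standard definitional argument; the paper itself states this result as a citation to Yao's work without reproving it, so there is nothing to compare against, but your use of seriality to pass from $S_{R}(x)\subseteq X$ to $S_{R}(x)\cap X\neq\emptyset$ is exactly the right (and essentially only) step. The closing remark about $X=\emptyset$ correctly identifies why the hypothesis cannot be dropped.
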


As an important concept in rough set theory, definable sets have been studied widely. Below we introduce its definition.

\begin{definition}(Definable set family \cite{pei2007definable})
\label{definition5}
Let $R$ be a relation on $U$ and $X\subseteq U$. If $\underline{R}(X)=X$, we call $X$ an inner definable set. If $\overline{R}(X)=X$, we call $X$ an outer definable set. If $X$ is both inner and outer definable set, we call $X$ a definable set. We denote the family of all the inner definable sets, outer definable sets and definable sets of $U$ induced by $R$ as $I(U,R)$, $O(U,R)$ and $D(U,R)$, respectively.
\end{definition}

It is obvious $D(U,R)=I(U,R)\cap O(U,R)$.

\subsection{The closure of relations}
\label{The closure of relations}

For presenting the computational formulae of closure operators, we need to introduce some concepts and notations.

\begin{definition}
\label{definition25}
Let $A$ be a set. We denote $\{(x,x)|x\in A\}$ as $I_{A}$.
\end{definition}

For any reflective relation $R$ on $A$, it is obvious $I_{A}\subseteq R$. Below we introduce the converse of a relation.

\begin{definition}
\label{definition26}
Let $R$ be a relation. We define the converse of $R$ as $R^{-1}=\{(x,y)|(y,x)\in R\}$.
\end{definition}

For any symmetric relation $R$ on $A$, it is obvious $R=R^{-1}$. Conversely, if $R=R^{-1}$, $R$ is symmetric.
Below we introduce the compound of two relations.

\begin{definition}(\cite{GengQuWang2002discretemathematics})
\label{definition27}
Let $F,G$ be two relations. We define $F\circ G$ as $F\circ G=\{(x,y)|\exists z((x,\\
z)\in G\wedge(z,y)\in F)\}$.
\end{definition}

The compound of relations satisfies the associative law.

\begin{proposition}(\cite{GengQuWang2002discretemathematics})
\label{proposition27A1}
Let $R_{1},R_{2},R_{3}$ be three relations. Then $(R_{1}\circ R_{2})\circ R_{3}=R_{1}\circ(R_{2}\circ R_{3})$.
\end{proposition}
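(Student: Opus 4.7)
The plan is to establish the set equality by showing that an arbitrary ordered pair $(x,y)$ belongs to the left-hand side if and only if it belongs to the right-hand side. Since Definition~\ref{definition27} characterizes membership in a composition purely through an existential quantifier over an intermediate element, unfolding the two nested compositions on each side reduces the claim to an elementary logical identity about swapping adjacent existential quantifiers and regrouping conjuncts.

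First I would expand $(x,y)\in(R_1\circ R_2)\circ R_3$: by Definition~\ref{definition27} this yields some $z$ with $(x,z)\in R_3$ and $(z,y)\in R_1\circ R_2$, and a second application of the same definition produces some $w$ with $(z,w)\in R_2$ and $(w,y)\in R_1$. Hence membership in the left-hand side is equivalent to the existence of $z,w$ satisfying the three-fold conjunction
\[
(x,z)\in R_3\ \wedge\ (z,w)\in R_2\ \wedge\ (w,y)\in R_1.
\]

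Next I would repeat the unfolding on the right-hand side. Membership $(x,y)\in R_1\circ(R_2\circ R_3)$ produces some $w$ with $(x,w)\in R_2\circ R_3$ and $(w,y)\in R_1$, and expanding $(x,w)\in R_2\circ R_3$ yields some $z$ with $(x,z)\in R_3$ and $(z,w)\in R_2$. The resulting condition is again the existence of $z,w$ satisfying exactly the same three-fold conjunction displayed above. Since $\exists z\,\exists w\,\varphi(z,w)$ and $\exists w\,\exists z\,\varphi(z,w)$ are logically equivalent, the two sides coincide, and the desired equality follows.

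The argument is entirely routine and there is no genuine obstacle; the only point requiring care is the order convention built into Definition~\ref{definition27}, where $F\circ G$ demands $(x,z)\in G$ and $(z,y)\in F$ rather than the reverse. A careless reading would swap the roles of $R_1$ and $R_3$ in the intermediate chain and obscure the symmetry, so the main discipline in writing the proof is to track consistently which relation supplies which pair at each unfolding step, making it transparent that the same three-fold conjunction appears on both sides.
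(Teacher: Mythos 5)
Your proof is correct: the paper states this proposition as a cited textbook result and gives no proof of its own, and your unfolding of both sides into the same three-fold conjunction $\exists z\,\exists w\,\bigl((x,z)\in R_{3}\wedge(z,w)\in R_{2}\wedge(w,y)\in R_{1}\bigr)$ is the standard argument. You have also correctly respected the order convention of Definition~\ref{definition27}, which is the one place where a slip would matter.
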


Since the associative law of the compound of relations holds, we can define the power of a relation.

\begin{definition}(\cite{GengQuWang2002discretemathematics})
\label{definition28}
Let $R\subseteq A\times A$ and $n$ be a natural number. We denote the nth power of $R$ as $R^{n}$, where
(1) $R^{0}=I_{A}$; (2) $R^{n+1}=R^{n}\circ R$.
\end{definition}

In order to turn an arbitrary relation to a reflective or symmetric or transitive relation, we introduce the concept of the closure of a relation.

\begin{definition}(Reflective (symmetric or transitive) closure \cite{GengQuWang2002discretemathematics})
\label{definition24}
Let $A\neq\emptyset$ and $R\subseteq A\times A$. $R^{\prime}$ is called the reflective (symmetric or transitive) closure of $R$ iff $R^{\prime}$ satisfies the following three conditions:\\
(1) $R^{\prime}$ is reflective (symmetric or transitive);\\
(2) $R\subseteq R^{\prime}$;\\
(3) For any reflective (symmetric or transitive) relation $R^{\prime\prime}$ on $A$, if $R\subseteq R^{\prime\prime}$, $R^{\prime}\subseteq R^{\prime\prime}$.

We denote the reflective, symmetric and transitive closure of $R$ as $r(R)$, $s(R)$ and $t(R)$, respectively.
\end{definition}

The following theorem presents the computational formulae of the above three closures.

\begin{theorem}(\cite{GengQuWang2002discretemathematics})
\label{theorem29}
Let $R\subseteq A\times A$ and $A\neq \emptyset$. Then\\
(1) $r(R)=R\cup I_{A}$; (2) $s(R)=R\cup R^{-1}$; (3) $t(R)=R\cup R^{2}\cup\cdots$.
\end{theorem}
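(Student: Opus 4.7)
The plan is to prove each of the three parts separately by verifying the three defining conditions of a closure from Definition~\ref{definition24}: namely, that the claimed set has the required property (reflectivity, symmetry, or transitivity), that it contains $R$, and that it is contained in every relation of that type which contains $R$.

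For part (1), I would observe that $R \cup I_A$ is reflective because $I_A$ supplies $(x,x)$ for every $x \in A$, that $R \subseteq R \cup I_A$ is immediate, and that any reflective $R''$ with $R \subseteq R''$ must contain $I_A$, so $R \cup I_A \subseteq R''$. For part (2), the symmetry of $R \cup R^{-1}$ follows from $(R \cup R^{-1})^{-1} = R^{-1} \cup R$; containment of $R$ is trivial; and minimality holds because if $R''$ is symmetric with $R \subseteq R''$, then taking converses gives $R^{-1} \subseteq (R'')^{-1} = R''$, so $R \cup R^{-1} \subseteq R''$.

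Part (3) is where the real work lies. Writing $T = \bigcup_{n \geq 1} R^n$, containment $R = R^1 \subseteq T$ is clear. To verify transitivity, suppose $(x,y) \in R^m$ and $(y,z) \in R^n$; I would first establish the auxiliary identity $R^n \circ R^m = R^{m+n}$ by induction on $n$, using Proposition~\ref{proposition27A1} (associativity of composition) in the inductive step, from which $(x,z) \in R^{m+n} \subseteq T$. For minimality, given a transitive $R'' \supseteq R$, I would prove $R^n \subseteq R''$ by induction on $n \geq 1$: the base case $n = 1$ is the hypothesis, and the inductive step uses $R^{n+1} = R^n \circ R \subseteq R'' \circ R'' \subseteq R''$, where the last inclusion is transitivity of $R''$ unfolded elementwise.

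The main obstacle is the auxiliary identity $R^n \circ R^m = R^{m+n}$ needed in part (3): it has to be derived from the recursive definition together with the associativity of composition, and care is needed to keep the indices straight (in particular noting that $R^m \circ I_A = R^m$ to handle the boundary of the induction). Once this identity is in hand, both the transitivity of $T$ and the minimality bound follow cleanly, and the rest of the argument is essentially bookkeeping.
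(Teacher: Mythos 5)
The paper offers no proof of Theorem~\ref{theorem29}: it is imported verbatim from the textbook \cite{GengQuWang2002discretemathematics}, so there is nothing to compare against. Your argument is the standard one and is correct: parts (1) and (2) are immediate verifications of the three clauses of Definition~\ref{definition24}, and part (3) correctly reduces to the identity $R^{n}\circ R^{m}=R^{m+n}$ for transitivity and to the monotonicity of composition plus $R''\circ R''\subseteq R''$ for minimality. The only point worth tightening is the auxiliary identity itself: with the recursion $R^{n+1}=R^{n}\circ R$ appending a factor on the right, the clean induction is on the \emph{right-hand} exponent ($R^{n}\circ R^{m+1}=R^{n}\circ(R^{m}\circ R)=(R^{n}\circ R^{m})\circ R$ by Proposition~\ref{proposition27A1}); inducting on the left exponent as you state would additionally require $R\circ R^{m}=R^{m}\circ R$, which is true but needs its own small induction. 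This is exactly the index bookkeeping you flag, and it does not affect the correctness of the overall plan.
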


The following proposition indicates that mixing the above three closure operations, we can obtain only two possibly different relations.

\begin{proposition}(\cite{GengQuWang2002discretemathematics})
\label{proposition8B7}
Let $A\neq\emptyset$ and $R\subseteq A\times A$. Then $rts(R)=trs(R)=tsr(R)$ and $rst(R)=str(R)=srt(R)$.
\end{proposition}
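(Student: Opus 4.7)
The plan is to reduce everything to two commutativity lemmas for the basic closure operators:
\emph{(a)} $rs(R)=sr(R)$ and \emph{(b)} $rt(R)=tr(R)$ for every $R\subseteq A\times A$. Once these are in hand, the six equalities in the statement drop out by rewriting. For the first chain, applying \emph{(b)} with the relation $s(R)$ in place of $R$ gives $r(t(s(R)))=t(r(s(R)))$, i.e.\ $rts(R)=trs(R)$; and applying \emph{(a)} and then the monotone operator $t$ to both sides gives $t(rs(R))=t(sr(R))$, i.e.\ $trs(R)=tsr(R)$. For the second chain, applying \emph{(a)} with $t(R)$ in place of $R$ gives $rst(R)=srt(R)$, and applying $s$ to \emph{(b)} gives $srt(R)=str(R)$.

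It remains to prove \emph{(a)} and \emph{(b)}, and both are direct computations from the explicit formulae in Theorem~\ref{theorem29}. For \emph{(a)}, we compute
\begin{equation*}
r(s(R))=s(R)\cup I_{A}=R\cup R^{-1}\cup I_{A},
\end{equation*}
while
\begin{equation*}
s(r(R))=r(R)\cup r(R)^{-1}=(R\cup I_{A})\cup (R\cup I_{A})^{-1}=R\cup R^{-1}\cup I_{A},
\end{equation*}
using the easy identity $I_{A}^{-1}=I_{A}$ and the fact that the converse distributes over union.

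The main (and only non-trivial) obstacle is \emph{(b)}, because it requires identifying the powers $(R\cup I_{A})^{n}$. The key observation, to be proved by induction on $n$ using the distributivity of $\circ$ over $\cup$ together with $R\circ I_{A}=I_{A}\circ R=R$ and $I_{A}\circ I_{A}=I_{A}$, is
\begin{equation*}
(R\cup I_{A})^{n}=I_{A}\cup R\cup R^{2}\cup\cdots\cup R^{n}\qquad (n\in N^{+}).
\end{equation*}
Taking the union over all $n\geq 0$ then yields
\begin{equation*}
t(r(R))=\bigcup_{n\geq 1}(R\cup I_{A})^{n}=I_{A}\cup\bigcup_{n\geq 1}R^{n}=I_{A}\cup t(R)=r(t(R)),
\end{equation*}
which is \emph{(b)}. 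Combining \emph{(a)}, \emph{(b)} and the rewriting described in the first paragraph gives both displayed equalities of the proposition.
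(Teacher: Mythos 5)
Your proof is correct and complete. The paper itself offers no argument for this proposition --- it is imported verbatim from the textbook \cite{GengQuWang2002discretemathematics} --- so there is no internal proof to compare against; what you supply is a self-contained derivation from the computational formulae of Theorem~\ref{theorem29}. Your decomposition into the two commutation lemmas $rs=sr$ and $rt=tr$ is the natural and efficient route: since both lemmas hold for \emph{every} relation on $A$, the six expressions collapse by substituting $s(R)$ or $t(R)$ for $R$ and by applying $t$ or $s$ to both sides of an identity (for which you only need that $t$ and $s$ are well-defined maps, not that they are monotone, so that phrase is harmless but superfluous). The only step with real content is $(R\cup I_{A})^{n}=I_{A}\cup R\cup R^{2}\cup\cdots\cup R^{n}$, and your induction goes through using exactly the ingredients you name: distributivity of $\circ$ over $\cup$, $R^{k}\circ I_{A}=R^{k}$, $I_{A}\circ R=R$, and the defining recursion $R^{k+1}=R^{k}\circ R$; the union over all $n$ then yields $tr(R)=I_{A}\cup t(R)=rt(R)$. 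It is also worth noting that you correctly stop short of claiming $st=ts$, which is false in general --- the paper's example with $A=\{1,2,3,4\}$ and $R=\{(1,2),(2,3),(1,4)\}$ shows $rst(R)$ need not be transitive while $rts(R)$ always is --- so the two chains in the proposition genuinely cannot be merged, and your argument respects that.
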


The following example indicates that $rst(R)$ may not be an equivalence relation.

\begin{example}
\label{example39B}
Let $A=\{1,2,3,4\}$ and $R=\{(1,2),(2,3),(1,4)\}$. Then $rst(R)=I_{A}\cup\{(1,2),(2,3),(1,3),(1,4),(2,1),(3,2),(3,1),(4,1)\}$. It is obvious $I_{A}\cup\{(1,2),(2,3),\\
(1,3),(1,4),(2,1),(3,2),(3,1),(4,1)\}$ is not transitive. Thus $rst(R)$ is not an equivalence relation.
\end{example}

However, $rts(R)$ has the following properties.

\begin{proposition}(\cite{GengQuWang2002discretemathematics})
\label{proposition8B8}
Let $A\neq\emptyset$ and $R\subseteq A\times A$. Then $rts(R)$ satisfies the following properties:\\
(1) $rts(R)$ is an equivalence relation;\\
(2) $R\subseteq rts(R)$;\\
(3) For any equivalence relation $R^{\prime\prime}$ on $A$, if $R\subseteq R^{\prime\prime}$, $rts(R)\subseteq R^{\prime\prime}$.
\end{proposition}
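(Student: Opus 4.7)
The plan is to prove the three claims in order, leveraging the computational formulae from Theorem~\ref{theorem29} and the universal properties encoded in Definition~\ref{definition24}. The essential observation driving the proof is that each of the three closure operators, when applied on top of a relation that already enjoys the other properties, preserves those properties; once this is verified, claims (2) and (3) fall out almost mechanically.

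For claim (1), I would check reflexivity, symmetry, and transitivity of $rts(R)$ separately. Reflexivity is immediate because $r$ is applied outermost: $rts(R)=ts(R)\cup I_{A}\supseteq I_{A}$. For symmetry, I first note that $s(R)=R\cup R^{-1}$ is symmetric; then I would argue that $t$ preserves symmetry by induction on $n$ using the formula $t(s(R))=s(R)\cup s(R)^{2}\cup\cdots$ from Theorem~\ref{theorem29}, since if $(x,y)\in s(R)^{n}$ via a chain $x=z_{0},z_{1},\ldots,z_{n}=y$, then the reversed chain witnesses $(y,x)\in s(R)^{n}$; finally, $r$ preserves symmetry because adding the diagonal $I_{A}$ to a symmetric relation yields a symmetric relation. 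For transitivity, I first note that $t(s(R))$ is transitive (this follows from its definition and can be invoked directly as part of Theorem~\ref{theorem29}); then I would show that adjoining $I_{A}$ preserves transitivity by a short case analysis: if $(x,y),(y,z)\in ts(R)\cup I_{A}$ and either pair lies in $I_{A}$, the conclusion $(x,z)\in ts(R)\cup I_{A}$ is immediate, and otherwise both pairs lie in the transitive relation $ts(R)$.

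Claim (2) is a direct chain of inclusions using Definition~\ref{definition24}: $R\subseteq s(R)\subseteq t(s(R))\subseteq r(t(s(R)))=rts(R)$. Claim (3) proceeds by peeling the closures from the inside out, using the minimality property of each closure. Given an equivalence relation $R''\supseteq R$ on $A$, since $R''$ is symmetric, minimality of $s$ gives $s(R)\subseteq R''$; since $R''$ is transitive, minimality of $t$ gives $t(s(R))\subseteq R''$; finally, since $R''$ is reflexive, minimality of $r$ gives $rts(R)=r(t(s(R)))\subseteq R''$.

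The only genuinely nontrivial step is the preservation argument for symmetry under $t$ in the proof of (1); everything else is a direct invocation of the definitions and of Theorem~\ref{theorem29}. I do not expect any obstacle here, only a careful bookkeeping of which closure operator was applied when.
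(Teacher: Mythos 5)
Your proof is correct. Note that the paper itself offers no proof of this proposition -- it is quoted verbatim from the textbook reference \cite{GengQuWang2002discretemathematics} -- so there is no in-paper argument to compare against; your argument is the standard one. All three parts check out: reflexivity of $rts(R)$ is immediate from the outermost $r$; symmetry survives $t$ because reversing a witnessing chain in the symmetric relation $s(R)=R\cup R^{-1}$ (via the chain characterization of $s(R)^{n}$, cf.\ Proposition~\ref{proposition28A}) yields a chain for the reversed pair, and adjoining $I_{A}$ preserves both symmetry and transitivity by your case analysis; claims (2) and (3) follow by peeling the three closures using conditions (2) and (3) of Definition~\ref{definition24}, applied in the right order ($s$, then $t$, then $r$). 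The ordering matters -- as Example~\ref{example39B} in the paper shows, $rst(R)$ need not be transitive -- and your argument correctly exploits that $t$ is applied after $s$ and that $r$ destroys neither property.
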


Based on the above proposition, we introduce the concept of equivalent closure.

\begin{definition}(Equivalent closure \cite{GengQuWang2002discretemathematics})
\label{definition35}
Let $A\neq\emptyset$ and $R\subseteq A\times A$. We define the equivalent closure of $R$ as $e(R)=rts(R)$.
\end{definition}

\section{The fundamental properties of definable sets}
\label{The fundamental properties of definable sets}

In this section, on the basis of some existing results, we investigate further some fundamental properties of definable sets.
Liu and Zhu \cite{LiuZhu08TheAlgebraic} gave the following proposition.

\begin{theorem}(\cite{LiuZhu08TheAlgebraic})
\label{theorem32A}
Let $R$ be a relation on $U$. Then $D(U,R)\neq\emptyset$ iff $R$ is serial.
\end{theorem}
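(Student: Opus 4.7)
The plan is to prove the two directions separately, exploiting the behavior of the approximation operators on the extreme sets $\emptyset$ and $U$.

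For the \emph{if} direction, I would try to show that seriality forces $\emptyset$ (and symmetrically $U$) to lie in $D(U,R)$. Note that $\overline{R}(\emptyset)=\emptyset$ holds unconditionally, since $S_{R}(x)\cap\emptyset=\emptyset$ for all $x$. For the inner side, $\underline{R}(\emptyset)=\{x:S_{R}(x)\subseteq\emptyset\}=\{x:S_{R}(x)=\emptyset\}$, and this is empty exactly when $R$ is serial (by the remark following Definition~\ref{definitionA4}). Hence if $R$ is serial, then $\emptyset\in I(U,R)\cap O(U,R)=D(U,R)$, so $D(U,R)\neq\emptyset$.

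For the \emph{only if} direction, I would argue by contrapositive: assuming $R$ is not serial, pick a witness $x_{0}\in U$ with $S_{R}(x_{0})=\emptyset$, and show that no $X\subseteq U$ can be simultaneously inner and outer definable. The key observation is that for every $X\subseteq U$, the vacuous inclusion $S_{R}(x_{0})=\emptyset\subseteq X$ puts $x_{0}\in\underline{R}(X)$, while $S_{R}(x_{0})\cap X=\emptyset$ puts $x_{0}\notin\overline{R}(X)$. Thus $\underline{R}(X)\neq\overline{R}(X)$ for every $X\subseteq U$, so the equalities $\underline{R}(X)=X$ and $\overline{R}(X)=X$ cannot both hold; consequently $D(U,R)=\emptyset$.

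I do not expect any real obstacle: the argument turns entirely on extracting an element $x_{0}$ with empty successor neighborhood and observing its opposite behavior under the two approximation operators. If one wanted a purely symmetric presentation, one could equally well use $U$ instead of $\emptyset$ as the witness in the forward direction, since $\underline{R}(U)=U$ holds always and $\overline{R}(U)=\{x:S_{R}(x)\neq\emptyset\}=U$ iff $R$ is serial; this is just the dual computation via Proposition~\ref{proposition4}.
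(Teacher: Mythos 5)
Your argument is correct and complete: the empty set (or dually $U$) witnesses $D(U,R)\neq\emptyset$ under seriality, and a point $x_{0}$ with $S_{R}(x_{0})=\emptyset$ lies in $\underline{R}(X)\setminus\overline{R}(X)$ for every $X$, killing definability when seriality fails. The paper itself gives no proof of this theorem (it is quoted from Liu and Zhu), but your argument is the standard one and nothing is missing.
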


Based on the above theorem, in order to study $D(U,R)$, in most cases we first assume $R$ is serial. In fact, we can still obtain the following proposition.

\begin{proposition}
\label{proposition44}
Let $R$ be a relation on $U$. If $I(U,R)=O(U,R)$, $R$ is serial.
\end{proposition}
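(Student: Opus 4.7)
The plan is to exploit the extremal elements $\emptyset$ and $U$, which behave trivially under exactly one of the two approximation operators but encode seriality under the other.

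First I would examine the empty set. By Definition~\ref{definition3}, $\overline{R}(\emptyset)=\{x\mid S_R(x)\cap\emptyset\neq\emptyset\}=\emptyset$, so $\emptyset\in O(U,R)$ regardless of $R$. On the other hand, $\underline{R}(\emptyset)=\{x\mid S_R(x)\subseteq\emptyset\}=\{x\mid S_R(x)=\emptyset\}$. Hence $\emptyset\in I(U,R)$ is equivalent to the statement that no point of $U$ has an empty successor neighborhood, which is exactly the definition of $R$ being serial.

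Now if $I(U,R)=O(U,R)$, then from $\emptyset\in O(U,R)$ I obtain $\emptyset\in I(U,R)$, i.e.\ $\underline{R}(\emptyset)=\emptyset$. By the above calculation this forces $S_R(x)\neq\emptyset$ for every $x\in U$, and $R$ is serial. (A symmetric argument using $U$ works as well: $\underline{R}(U)=U$ always holds, while $\overline{R}(U)=\{x\mid S_R(x)\neq\emptyset\}$, so $U\in O(U,R)$ is itself equivalent to seriality, and the hypothesis transfers $U$ from $I(U,R)$ into $O(U,R)$.)

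There is essentially no obstacle here: the argument is a one-line unfolding of Definition~\ref{definition3} applied to $\emptyset$ (or $U$), combined with Definition~\ref{definition5}. The only point to keep in mind is choosing the trivial witness correctly, since $\emptyset$ is the witness that makes the outer side automatic and the inner side encode seriality, while $U$ plays the dual role.
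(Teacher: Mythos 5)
Your proof is correct and is essentially the same as the paper's: the paper uses the witness $U$ (noting $U\in I(U,R)$ always, while $U\in O(U,R)$ iff $R$ is serial), which is exactly the dual argument you give in your parenthetical remark, and your primary witness $\emptyset$ is just the complementary version of the same idea. No issues.
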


\begin{proof}
We use the proof by contradiction. Suppose $R$ is not serial. It is obvious $U\in I(U,R)-O(U,R)$. Thus $I(U,R)\neq O(U,R)$. It is contradictory.
$\Box$
\end{proof}

For the simplicity of the description of the following propositions, we define several new notations.

\begin{definition}
\label{definition8}
Let $R$ be a relation on $U$ and $X\subseteq U$. We define $R_{R}(X)$, $P_{R}(X)$ and $V_{R}(X)$ as $S_{R}(X)=\cup_{x\in X}S_{R}(x)$, $P_{R}(X)=\cup_{x\in X}P_{R}(x)$ and $V_{R}(X)=R_{R}(X)\cup P_{R}(X)$, respectively.
\end{definition}

In \cite{yang2009algebraic}, Yang and Xu gave the following proposition.

\begin{proposition}(\cite{yang2009algebraic})
\label{proposition8A6}
Let $R$ be a relation on $U$ and $X\subseteq U$. Then $X\subseteq\underline{R}(X)$ iff $S_{R}(X)\subseteq X$; $\overline{R}(X)\subseteq X$ iff $P_{R}(X)\subseteq X$;
$\overline{R}(X)\subseteq X\subseteq\underline{R}(X)$ iff $V_{R}(X)\subseteq X$.
\end{proposition}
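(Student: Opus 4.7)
The plan is to prove the three equivalences in sequence, with the third following as an immediate consequence of the first two. All three are essentially definition-chasing arguments, so the main obstacle is simply keeping the quantifiers straight; there is no conceptual difficulty.

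First I would tackle the equivalence $X\subseteq\underline{R}(X)\Leftrightarrow S_{R}(X)\subseteq X$. Unfolding Definition~\ref{definition3}, the containment $X\subseteq\underline{R}(X)$ says exactly that every $x\in X$ satisfies $S_{R}(x)\subseteq X$. Taking the union over $x\in X$ and using the notation of Definition~\ref{definition8}, this is the same as $S_{R}(X)=\bigcup_{x\in X}S_{R}(x)\subseteq X$. Conversely, if $S_{R}(X)\subseteq X$, then for each $x\in X$ we have $S_{R}(x)\subseteq S_{R}(X)\subseteq X$, so $x\in\underline{R}(X)$. Both directions are one line each.

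Next I would prove $\overline{R}(X)\subseteq X\Leftrightarrow P_{R}(X)\subseteq X$. My preferred route is the dual one, via Proposition~\ref{proposition4}: complementing both sides, $\overline{R}(X)\subseteq X$ is equivalent to $-X\subseteq -\overline{R}(X)=\underline{R}(-X)$, so by the first equivalence (applied to $-X$) it is equivalent to $S_{R}(-X)\subseteq -X$. Unpacking this last inclusion gives: for every $y\notin X$ and every $z$ with $yRz$, also $z\notin X$; the contrapositive rephrases this as: for every $z\in X$ and every $y$ with $yRz$, $y\in X$, which is precisely $P_{R}(X)\subseteq X$. (A direct argument is equally painless: $\overline{R}(X)\subseteq X$ means every $x$ whose successor set meets $X$ lies in $X$, i.e.\ every predecessor of a point in $X$ lies in $X$.)

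Finally, the third equivalence is immediate from the first two together with the definition $V_{R}(X)=S_{R}(X)\cup P_{R}(X)$: the conjunction $\overline{R}(X)\subseteq X\subseteq\underline{R}(X)$ holds iff both $S_{R}(X)\subseteq X$ and $P_{R}(X)\subseteq X$ hold, which holds iff their union $V_{R}(X)$ is contained in $X$. The only thing to watch for throughout is distinguishing $S_{R}(x)$ (a set defined for a point) from $S_{R}(X)$ (the extended notation just introduced in Definition~\ref{definition8}); once that bookkeeping is in place, the proof is essentially three short bidirectional unfoldings.
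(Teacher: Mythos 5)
Your proof is correct. The paper itself gives no proof of this proposition---it is quoted from Yang and Xu \cite{yang2009algebraic} as a known result---so there is nothing to compare against; your three definition-unfoldings (including the observation that $x\in\overline{R}(X)$ iff $x\in P_{R}(y)$ for some $y\in X$, i.e.\ iff $x\in P_{R}(X)$, and the correct reading of $V_{R}(X)$ as $S_{R}(X)\cup P_{R}(X)$ despite the typo ``$R_{R}(X)$'' in Definition~\ref{definition8}) are exactly what a self-contained proof requires.
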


Based on Propositions \ref{proposition3A} and \ref{proposition8A6}, we have the following proposition.

\begin{proposition}
\label{proposition8A3}
Let $R$ be a serial relation on $U$ and $X\subseteq U$. Then $X\in D(U,R)$ iff $V_{R}(X)\subseteq X$.
\end{proposition}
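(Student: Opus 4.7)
The plan is to combine the seriality hypothesis (via Proposition \ref{proposition3A}) with the characterization of the sandwich condition $\overline{R}(X)\subseteq X\subseteq\underline{R}(X)$ given in Proposition \ref{proposition8A6}. The essential observation is that for a serial $R$, being in $D(U,R)$ is equivalent to merely satisfying the two inclusions $\overline{R}(X)\subseteq X$ and $X\subseteq\underline{R}(X)$, rather than two equalities; once this is established, the result follows immediately from Proposition \ref{proposition8A6}.

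First I would unfold the definitions: $X\in D(U,R)$ means $\underline{R}(X)=X$ and $\overline{R}(X)=X$, which in particular implies $\overline{R}(X)\subseteq X\subseteq\underline{R}(X)$. This is the easy direction. For the converse, I would assume $\overline{R}(X)\subseteq X\subseteq\underline{R}(X)$, then invoke Proposition \ref{proposition3A} to produce the chain
\[
X\subseteq \underline{R}(X)\subseteq \overline{R}(X)\subseteq X,
\]
which forces $\underline{R}(X)=X=\overline{R}(X)$, hence $X\in D(U,R)$. This is exactly the place where seriality is used in an essential way.

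Having shown the equivalence $X\in D(U,R)\Leftrightarrow \overline{R}(X)\subseteq X\subseteq \underline{R}(X)$, I would then apply Proposition \ref{proposition8A6}, which directly gives $\overline{R}(X)\subseteq X\subseteq\underline{R}(X)\Leftrightarrow V_R(X)\subseteq X$. Chaining the two equivalences yields $X\in D(U,R)\Leftrightarrow V_R(X)\subseteq X$, as desired.

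There is no real obstacle here; the proof is essentially a two-line bookkeeping argument. The only subtlety worth flagging is that without seriality, the sandwich condition $\overline{R}(X)\subseteq X\subseteq \underline{R}(X)$ need not imply $\underline{R}(X)=\overline{R}(X)$ (for instance, $X=U$ satisfies the sandwich trivially but $\underline{R}(U)$ and $\overline{R}(U)$ can differ when $R$ is not serial), so Proposition \ref{proposition3A} is what closes the loop and legitimately lets us reduce a pair of equalities to a pair of inclusions.
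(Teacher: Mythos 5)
Your proof is correct and follows essentially the same route as the paper's: the forward direction unfolds the definition and applies Proposition~\ref{proposition8A6}, and the backward direction combines Proposition~\ref{proposition3A} (seriality) with Proposition~\ref{proposition8A6} to turn the sandwich inclusions into equalities. Your chain $X\subseteq\underline{R}(X)\subseteq\overline{R}(X)\subseteq X$ is just a slightly more compact rendering of the paper's argument, and your remark about where seriality is indispensable is accurate.
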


\begin{proof}
$(\Rightarrow)$: By $X\in D(U,R)$, we know that $\overline{R}(X)=X=\underline{R}(X)$. Thus $\overline{R}(X)\subseteq X\subseteq\underline{R}(X)$. By Proposition \ref{proposition8A6}, we know that $V_{R}(X)\subseteq X$.

$(\Leftarrow)$: By Proposition \ref{proposition3A}, we know that $\underline{R}(X)\subseteq\overline{R}(X)$. By Proposition \ref{proposition8A6}, we know that $\overline{R}(X)\subseteq\underline{R}(X)$. Thus $\underline{R}(X)=\overline{R}(X)$. Again by Proposition \ref{proposition8A6}, we have that $\overline{R}(X)=X=\underline{R}(X)$. Then $X\in D(U,R)$.
$\Box$
\end{proof}

In fact, under the condition that $R$ is serial, we have the following proposition.

\begin{proposition}
\label{proposition17A}
Let $R$ be a serial relation on $U$. Then for any $X\subseteq U$, it follows that $V_{R}(X)=X$ iff $V_{R}(X)\subseteq X$.
\end{proposition}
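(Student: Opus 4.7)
The plan is to prove both directions, with the forward direction being trivial (equality plainly implies containment) and the backward direction being the substantive content. So the real task is, assuming $V_R(X) \subseteq X$, to establish the reverse inclusion $X \subseteq V_R(X)$; the hypothesis that $R$ is serial is what makes this work, since without it the empty set would be a counterexample (take $X=\emptyset$ under a non-serial $R$ with some $x$ having $S_R(x)=\emptyset$ wouldn't directly break it, but the predecessor witness chain below genuinely requires seriality).

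For the nontrivial direction, I would pick an arbitrary $x \in X$ and build a chain that puts $x$ into $V_R(X)$ via the predecessor neighborhood. First I would invoke seriality to produce some $y \in U$ with $xRy$; then by definition of $S_R$ this $y$ lies in $S_R(x)$, and since $x \in X$, in $S_R(X)$, hence in $V_R(X)$. Applying the hypothesis $V_R(X) \subseteq X$, I get $y \in X$. Now the symmetric observation $y \in S_R(x) \Leftrightarrow x \in P_R(y)$ noted after Definition 2 (or just unravelling definitions) gives $x \in P_R(y) \subseteq P_R(X) \subseteq V_R(X)$, which is exactly what was wanted.

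Combining this with the assumed $V_R(X) \subseteq X$ yields $V_R(X) = X$, completing the backward direction and hence the biconditional. The main (and only) obstacle is noticing that seriality must be used to manufacture a successor whose presence in $X$ then lets us certify $x$ as a predecessor element; once that trick is in hand, the argument is a short set-chasing calculation and nothing more involved is needed.
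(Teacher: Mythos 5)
Your proof is correct and uses essentially the same argument as the paper: seriality produces a successor $y$ of $x$, the hypothesis forces $y\in X$, and then $x\in P_R(y)\subseteq V_R(X)$. The only difference is cosmetic --- the paper phrases the backward direction as a proof by contradiction, while you give the same witness chain directly, which is if anything slightly cleaner.
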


\begin{proof}
$(\Rightarrow)$: It is straightforward.

$(\Leftarrow)$: We use the proof by contradiction. Suppose $V_{R}(X)\neq X$. Then $X-V_{R}(X)\neq\emptyset$. Without loss of generality, suppose $a\in X-V_{R}(X)$. Since $R$ is serial, there exists some $b\in U$ such that $b\in S_{R}(a)$. Since $S_{R}(a)\subseteq X$, $b\in X$. It is obvious $a\in P_{R}(b)$. Since $P_{R}(b)\subseteq V_{R}(X)$, $a\in V_{R}(X)$. It is contradictory.
$\Box$
\end{proof}

The following example indicates that the converse of the above proposition is not true.

\begin{example}
\label{example17A1}
Let $U=\{1,2,3\}$ and $R=\{(1,2),(3,1)\}$. Since $S_{R}(2)=\emptyset$, $R$ is not serial, but $\{X\subseteq U|V_{R}(X)\subseteq X\}=\{\emptyset,\{1,2,3\}\}=\{X\subseteq U|V_{R}(X)=X\}$.
\end{example}

Based on Propositions \ref{proposition17A} and \ref{proposition8A3}, we have the following proposition.

\begin{proposition}
\label{proposition8A4}
Let $R$ be a serial relation on $U$ and $X\subseteq U$. Then $X\in D(U,R)$ iff $V_{R}(X)=X$.
\end{proposition}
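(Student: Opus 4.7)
The plan is to prove the biconditional by chaining Propositions \ref{proposition8A3} and \ref{proposition17A}, since both are already established for serial relations and together they directly yield the desired equivalence.

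First I would observe that Proposition \ref{proposition8A3} gives $X \in D(U,R) \Leftrightarrow V_R(X) \subseteq X$ under the hypothesis that $R$ is serial, and Proposition \ref{proposition17A} gives $V_R(X) = X \Leftrightarrow V_R(X) \subseteq X$ under the same hypothesis. Since the middle condition $V_R(X) \subseteq X$ appears in both, I would simply transit through it: $X \in D(U,R) \Leftrightarrow V_R(X) \subseteq X \Leftrightarrow V_R(X) = X$.

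For cleanliness I might present the two directions separately. For the forward direction, from $X \in D(U,R)$ apply Proposition \ref{proposition8A3} to get $V_R(X) \subseteq X$, then apply the $(\Leftarrow)$ part of Proposition \ref{proposition17A} to upgrade this to $V_R(X) = X$. For the reverse direction, from $V_R(X) = X$ the inclusion $V_R(X) \subseteq X$ is immediate, and then Proposition \ref{proposition8A3} yields $X \in D(U,R)$.

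There is essentially no obstacle here: the work has been front-loaded into Propositions \ref{proposition8A3} and \ref{proposition17A}, and the seriality of $R$ is exactly the hypothesis required by both. The only thing to be careful about is not to invoke Proposition \ref{proposition17A} without the seriality hypothesis, since Example \ref{example17A1} shows the equivalence can still hold for non-serial $R$ on a case-by-case basis but Proposition \ref{proposition17A} itself genuinely uses seriality in its contradiction argument. No new constructions, computations, or inductive arguments are needed.
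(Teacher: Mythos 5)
Your proof is correct and is exactly the paper's approach: the paper likewise obtains this proposition by combining Proposition \ref{proposition8A3} with Proposition \ref{proposition17A}, transiting through the condition $V_{R}(X)\subseteq X$. Nothing further is needed.
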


The following proposition presents a relationship between inner and outer definable set families.

\begin{proposition}
\label{proposition32B}
$X\in I(U,R)\Leftrightarrow-X\in O(U,R)$.
\end{proposition}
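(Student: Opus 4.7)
The plan is to derive both directions of the biconditional directly from the De Morgan-style duality between the lower and upper approximation operators, namely Proposition \ref{proposition4}, which gives $\overline{R}(Y)=-\underline{R}(-Y)$ and $\underline{R}(Y)=-\overline{R}(-Y)$ for every $Y\subseteq U$. The key point is that complementation is an involution ($-(-X)=X$), so applying the duality with $Y=X$ in one direction and $Y=-X$ in the other will immediately convert the defining equation of $I(U,R)$ into that of $O(U,R)$ and vice versa.

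For the forward direction, I would assume $X\in I(U,R)$, i.e.\ $\underline{R}(X)=X$, and then compute $\overline{R}(-X)=-\underline{R}(-(-X))=-\underline{R}(X)=-X$ using Proposition \ref{proposition4}(1), which yields $-X\in O(U,R)$. For the backward direction, I would assume $-X\in O(U,R)$, i.e.\ $\overline{R}(-X)=-X$, and then compute $\underline{R}(X)=-\overline{R}(-X)=-(-X)=X$ using Proposition \ref{proposition4}(2), which gives $X\in I(U,R)$.

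There is really no obstacle here; the proposition is a one-line consequence of the duality between $\underline{R}$ and $\overline{R}$ together with $--X=X$. The only thing that must be handled carefully is the unambiguous use of the complement symbol $-$ (always understood as complement in $U$) so that the substitution of $-X$ for $X$ in Proposition \ref{proposition4} is applied correctly; since no seriality or other structural hypothesis on $R$ is needed, the statement holds for every relation $R$ on $U$.
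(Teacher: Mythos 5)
Your proof is correct and uses exactly the same idea as the paper: the duality $\underline{R}(X)=-\overline{R}(-X)$, $\overline{R}(X)=-\underline{R}(-X)$ from Proposition \ref{proposition4} together with $-(-X)=X$. The paper merely writes it as a single chain of equivalences rather than two separate implications, so the two arguments are essentially identical.
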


\begin{proof}
By Definition \ref{definition5} and Proposition \ref{proposition4}, we have that
$X\in I(U,R)\Leftrightarrow\underline{R}(X)=X\Leftrightarrow-\overline{R}(-X)=
X\Leftrightarrow\overline{R}(-X)=-X\Leftrightarrow-X\in O(U,R)$.
$\Box$
\end{proof}

According to the above proposition, we have the following proposition.

\begin{proposition}
\label{proposition32B11}
$X\in D(U,R)\Leftrightarrow-X\in D(U,R)$.
\end{proposition}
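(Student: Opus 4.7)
The plan is to derive this proposition directly from the preceding Proposition \ref{proposition32B} together with the identity $D(U,R)=I(U,R)\cap O(U,R)$ that was recorded right after Definition \ref{definition5}. The proposition is really just a statement that $D(U,R)$ is closed under complementation, and since $D(U,R)$ is the intersection of $I(U,R)$ and $O(U,R)$, I only need to check that Proposition \ref{proposition32B} swaps these two families under complementation.

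First I would unfold the membership: $X\in D(U,R)$ is equivalent to the conjunction $X\in I(U,R)$ and $X\in O(U,R)$. The first conjunct, by Proposition \ref{proposition32B}, is equivalent to $-X\in O(U,R)$. For the second conjunct, I would apply Proposition \ref{proposition32B} in the reverse direction: substituting $-X$ for $X$ there gives $-X\in I(U,R)\Leftrightarrow -(-X)\in O(U,R)$, and since $-(-X)=X$ this yields $X\in O(U,R)\Leftrightarrow -X\in I(U,R)$. Combining the two equivalences, $X\in D(U,R)$ iff $-X\in O(U,R)$ and $-X\in I(U,R)$, which is exactly $-X\in D(U,R)$.

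There is no real obstacle here; the only thing to be mildly careful about is remembering that Proposition \ref{proposition32B} as stated goes from $I$-membership of $X$ to $O$-membership of $-X$, so the second direction requires substituting $-X$ into it (and using $--X=X$) rather than reading it off directly. The whole argument is a one-line chain of biconditionals and can be written in that compact form, mirroring the style of the proof of Proposition \ref{proposition32B} just above.
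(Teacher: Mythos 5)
Your proof is correct and follows exactly the same route as the paper: unfold $D(U,R)=I(U,R)\cap O(U,R)$ and apply Proposition \ref{proposition32B} to each conjunct (once directly, once with $-X$ substituted for $X$). The paper's version is just the same chain of biconditionals written more tersely, leaving the substitution step implicit.
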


\begin{proof}
$X\in D(U,R)\Leftrightarrow(X\in I(U,R)\wedge X\in O(U,R))\Leftrightarrow(-X\in O(U,R)\wedge-X\in I(U,R))\Leftrightarrow-X\in D(U,R)$.
$\Box$
\end{proof}

Based on some above propositions, we present a sufficient condition for $I(U,R)=O(U,R)$.

\begin{proposition}
\label{proposition8B4}
Let $R$ be a serial relation on $U$. If for any $X\subseteq U$, $S_{R}(X)\subseteq X$ implies $P_{R}(X)\subseteq X$, $I(U,R)=O(U,R)$.
\end{proposition}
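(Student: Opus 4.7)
The plan is to establish both inclusions $I(U,R)\subseteq O(U,R)$ and $O(U,R)\subseteq I(U,R)$, using Proposition \ref{proposition8A6} to pass back and forth between the conditions on approximation operators and the conditions on $S_{R}$ and $P_{R}$, and using seriality (via Proposition \ref{proposition3A}) to upgrade containment to equality.

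For the inclusion $I(U,R)\subseteq O(U,R)$, I would fix $X\in I(U,R)$, i.e.\ $\underline{R}(X)=X$. In particular $X\subseteq\underline{R}(X)$, so Proposition \ref{proposition8A6} gives $S_{R}(X)\subseteq X$. The hypothesis then yields $P_{R}(X)\subseteq X$, and applying Proposition \ref{proposition8A6} in the other direction produces $\overline{R}(X)\subseteq X$. At this point I only have a containment, not equality, so I would use seriality: by Proposition \ref{proposition3A}, $\underline{R}(X)\subseteq\overline{R}(X)$, so
\[
X=\underline{R}(X)\subseteq\overline{R}(X)\subseteq X,
\]
forcing $\overline{R}(X)=X$, i.e.\ $X\in O(U,R)$.

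For the reverse inclusion $O(U,R)\subseteq I(U,R)$, rather than re-running the argument with $S_{R}$ and $P_{R}$ swapped (which would require the opposite hypothesis), I would exploit the complement duality already proved in Proposition \ref{proposition32B}. Given $X\in O(U,R)$, Proposition \ref{proposition32B} gives $-X\in I(U,R)$; the inclusion just proved yields $-X\in O(U,R)$; and one more application of Proposition \ref{proposition32B} gives $X\in I(U,R)$.

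The only genuinely subtle point is the use of seriality to turn $\overline{R}(X)\subseteq X$ into $\overline{R}(X)=X$ in the first half; the rest is a direct chase through Propositions \ref{proposition8A6} and \ref{proposition32B}. No obstacle beyond bookkeeping is expected.
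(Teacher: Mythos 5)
Your proposal is correct and follows essentially the same route as the paper: both directions hinge on Proposition \ref{proposition8A6} to translate $X\subseteq\underline{R}(X)$ into $S_{R}(X)\subseteq X$, apply the hypothesis, use seriality (Proposition \ref{proposition3A}) to upgrade containment to equality, and handle $O(U,R)\subseteq I(U,R)$ via the complement duality of Proposition \ref{proposition32B}. The only cosmetic difference is that you inline the seriality argument where the paper routes it through $V_{R}(X)\subseteq X$ and Proposition \ref{proposition8A3}, and you reuse the first inclusion for the second rather than re-deriving it on $-X$.
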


\begin{proof}
For any $X\in I(U,R)$, we know that $\underline{R}(X)=X$. Thus $X\subseteq\underline{R}(X)$. By Proposition \ref{proposition8A6}, we know that $S_{R}(X)\subseteq X$. Hence $P_{R}(X)\subseteq X$. Then $V_{R}(X)\subseteq X$. By Proposition \ref{proposition8A3}, we know that $X\in D(U,R)\subseteq O(U,R)$. Therefore $I(U,R)\subseteq O(U,R)$. For any $Y\in O(U,R)$, by Proposition \ref{proposition32B}, we know that $-Y\in I(U,R)$. Thus $\underline{R}(-Y)=-Y$. Hence $-Y\subseteq\underline{R}(-Y)$. By Proposition \ref{proposition8A6}, we know that $S_{R}(-Y)\subseteq -Y$. Then $P_{R}(-Y)\subseteq -Y$. Thus $V_{R}(-Y)\subseteq -Y$. By Proposition \ref{proposition8A3}, we know that $-Y\in D(U,R)$. By Proposition \ref{proposition32B11}, we know that $Y\in D(U,R)\subseteq I(U,R)$. Therefore $O(U,R)\subseteq I(U,R)$. Then $I(U,R)=O(U,R)$.
$\Box$
\end{proof}

The converse of the above proposition is not true. To illustrate this, let us see an example.

\begin{example}
\label{example39B}
Let $U=\{1,2,3\}$ and $R=\{(1,1),(1,2),(2,1),(3,2)\}$. Then $S_{R}(\{1,2\})\\
=\{1,2\}\subseteq\{1,2\}$ and $P_{R}(\{1,2\})=\{1,2,3\}\nsubseteq\{1,2\}$, but $I(U,R)=\{\emptyset,\{1,2,3\}\}=O(U,R)$.
\end{example}

For applying Proposition \ref{proposition8B4}, we first present the following lemma.

\begin{lemma}
\label{lemma8B5}
Let $R$ be a symmetric relation on $U$. Then for any $X\subseteq U$, $S_{R}(X)=P_{R}(X)$.
\end{lemma}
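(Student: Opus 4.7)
The plan is to prove set equality by a direct element-chase, using only the definitions of $S_R(X)$, $P_R(X)$ (Definition \ref{definition8}) together with the symmetry hypothesis $xRy \Leftrightarrow yRx$. No induction or case analysis should be needed; the lemma is essentially the pointwise symmetry $S_R(x) = P_R(x)$ propagated to unions over $X$.

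First I would observe that, for any single $x \in U$, the symmetry of $R$ gives $S_R(x) = P_R(x)$: indeed, $y \in S_R(x) \Leftrightarrow xRy \Leftrightarrow yRx \Leftrightarrow y \in P_R(x)$, where the middle equivalence uses symmetry. Next, invoking Definition \ref{definition8}, I would rewrite
\[
S_R(X) = \bigcup_{x \in X} S_R(x) = \bigcup_{x \in X} P_R(x) = P_R(X),
\]
which delivers the desired equality.

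An equivalent route, if one prefers to avoid the pointwise step, is to chase elements directly: $y \in S_R(X) \Leftrightarrow \exists x \in X\, (xRy) \Leftrightarrow \exists x \in X\, (yRx) \Leftrightarrow y \in P_R(X)$, where again the middle equivalence is the symmetry of $R$.

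There is no real obstacle; the only thing to be careful about is to cite Definition \ref{definition8} (the set-valued extensions $S_R(X)$, $P_R(X)$) rather than Definition \ref{definition2} (the pointwise neighborhoods), since these share notation. The lemma's role is clearly preparatory for applying Proposition \ref{proposition8B4} to symmetric serial relations, where the hypothesis $S_R(X) \subseteq X \Rightarrow P_R(X) \subseteq X$ becomes trivially satisfied.
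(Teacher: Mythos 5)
Your proposal is correct and matches the paper's own argument: the paper likewise first establishes the pointwise equality $S_{R}(a)=P_{R}(a)$ from symmetry and then passes to the union over $x\in X$ via Definition \ref{definition8}. The alternative direct element-chase you mention is equally valid but not needed.
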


\begin{proof}
For any $a\in U$ and any $b\in S_{R}(a)$, we know that $aRb$. Since $R$ is symmetric, $bRa$. Thus $b\in P_{R}(a)$. Hence $S_{R}(a)\subseteq P_{R}(a)$. Similarly, $P_{R}(a)\subseteq S_{R}(a)$. Then $S_{R}(X)=\cup_{x\in X}S_{R}(x)=\cup_{x\in X}P_{R}(x)=P_{R}(X)$.
$\Box$
\end{proof}

As a application of Proposition \ref{proposition8B4}, we have the following proposition.

\begin{proposition}
\label{proposition8B6}
Let $R$ be a serial and symmetric relation on $U$. Then $I(U,R)=O(U,R)$.
\end{proposition}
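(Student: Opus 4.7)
The plan is to apply Proposition~\ref{proposition8B4} directly, using Lemma~\ref{lemma8B5} to verify its hypothesis. Proposition~\ref{proposition8B4} reduces the equality $I(U,R)=O(U,R)$ (for a serial $R$) to checking the implication ``$S_R(X)\subseteq X \implies P_R(X)\subseteq X$'' for every $X\subseteq U$. So the whole task collapses to verifying that this implication holds whenever $R$ is symmetric.

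First I would invoke Lemma~\ref{lemma8B5}: since $R$ is symmetric, for every $X\subseteq U$ we have $S_R(X)=P_R(X)$. Then the implication in the hypothesis of Proposition~\ref{proposition8B4} becomes trivial, because if $S_R(X)\subseteq X$ then $P_R(X)=S_R(X)\subseteq X$ as well. At this point the seriality assumption, which we have not yet used, is exactly what is needed to invoke Proposition~\ref{proposition8B4}, yielding $I(U,R)=O(U,R)$.

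There is no real obstacle here: the proposition is essentially a corollary of Proposition~\ref{proposition8B4} combined with Lemma~\ref{lemma8B5}. The only thing worth being careful about is that Proposition~\ref{proposition8B4} requires $R$ to be serial, so we must keep the seriality hypothesis in the statement even though symmetry alone is what makes $S_R(X)$ and $P_R(X)$ coincide. The proof will therefore be only two or three lines: cite Lemma~\ref{lemma8B5} to equate $S_R(X)$ with $P_R(X)$, observe that the implication required by Proposition~\ref{proposition8B4} is automatic, and then apply Proposition~\ref{proposition8B4} to conclude.
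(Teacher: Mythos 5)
Your proposal is correct and is exactly the paper's argument: the paper's proof is the one-line ``It follows from Lemma~\ref{lemma8B5} and Proposition~\ref{proposition8B4}'', and you have simply spelled out why the hypothesis of Proposition~\ref{proposition8B4} is verified via Lemma~\ref{lemma8B5}. Nothing is missing.
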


\begin{proof}
It follows from Lemma \ref{lemma8B5} and Proposition \ref{proposition8B4}.
$\Box$
\end{proof}

\section{Simplification of equivalent closures under certain conditions}
\label{Simplification of equivalent closures under certain conditions}

In this section, we simplify the expression of equivalent closures under certain conditions and prove that serial relations satisfy these conditions. By Theorem \ref{theorem29}, we obtain the computational formula of equivalent closures.

\begin{corollary}
\label{corollary39A3}
Let $R\subseteq A\times A$ and $A\neq \emptyset$. Then $e(R)=I_{A}\cup(R\cup R^{-1})\cup(R\cup R^{-1})^{2}\cup\cdots$.
\end{corollary}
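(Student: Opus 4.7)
The plan is to unfold the definition $e(R)=rts(R)$ from Definition \ref{definition35} and then peel off the three closure operators one at a time by applying the corresponding computational formulae supplied by Theorem \ref{theorem29}. Since each formula expresses the closure as a concrete set-theoretic expression, the entire argument is a chain of substitutions rather than a genuine construction.

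Concretely, I would first rewrite $e(R)$ as $r(t(s(R)))$, which is meaningful since $A\neq\emptyset$ is assumed. Then I would apply Theorem \ref{theorem29}(2) to the innermost layer to obtain $s(R)=R\cup R^{-1}$. Next I would apply Theorem \ref{theorem29}(3) with the relation $s(R)$ playing the role of $R$ there, giving
\[
t(s(R))=s(R)\cup s(R)^{2}\cup\cdots=(R\cup R^{-1})\cup(R\cup R^{-1})^{2}\cup\cdots.
\]
Finally, I would apply Theorem \ref{theorem29}(1) to this transitive-symmetric relation to obtain
\[
r(t(s(R)))=t(s(R))\cup I_{A}=I_{A}\cup(R\cup R^{-1})\cup(R\cup R^{-1})^{2}\cup\cdots,
\]
which is precisely the claimed expression for $e(R)$.

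There is no real obstacle in this argument; the only minor subtlety worth noting is that Theorem \ref{theorem29}(3) must be applied to $s(R)$ rather than to $R$, and Theorem \ref{theorem29}(1) must be applied to $t(s(R))$ rather than to $R$, so one should be explicit about which relation plays the role of the variable in each appeal. Since Definition \ref{definition35} fixes the order of the closures as $rts$, no reordering using Proposition \ref{proposition8B7} is needed. The result therefore follows as a direct corollary, exactly as its label suggests.
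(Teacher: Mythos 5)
Your proof is correct and matches the paper's intent exactly: the paper offers no explicit argument, presenting the formula as an immediate consequence of Definition~\ref{definition35} and the three computational formulae of Theorem~\ref{theorem29}, which is precisely the substitution chain you carry out. Your explicit note about which relation plays the role of the variable at each application is a sensible clarification but does not change the approach.
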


For dealing with the power of relations, we present the following proposition.

\begin{proposition}
\label{proposition28A}
Let $R\subseteq A\times A$, $A\neq \emptyset$, $k\in N^{+}$ and $k\geq2$. Then $(x,y)\in R^{k}$ iff there exist $x_{1},x_{2},\cdots,x_{k-1}\in A$ such that $(x,x_{1}),(x_{1},x_{2}),\cdots,(x_{k-2},x_{k-1}),(x_{k-1},y)\in R$.
\end{proposition}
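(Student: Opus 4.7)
The plan is to prove the biconditional by induction on $k$, using the recursive definition $R^{n+1}=R^{n}\circ R$ from Definition~\ref{definition28} and the expansion of composition given in Definition~\ref{definition27}. Both directions can be handled simultaneously inside a single induction because each step is an ``iff''.

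For the base case $k=2$, I would unwind $R^{2}=R^{1}\circ R=R\circ R$ and apply Definition~\ref{definition27} directly: $(x,y)\in R\circ R$ iff there exists $z\in A$ with $(x,z)\in R$ and $(z,y)\in R$. Renaming $z$ as $x_{1}$ gives exactly the stated condition for $k=2$.

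For the inductive step, assume the equivalence for some $k\geq 2$ and consider $R^{k+1}=R^{k}\circ R$. By Definition~\ref{definition27}, $(x,y)\in R^{k}\circ R$ iff there exists $z\in A$ with $(x,z)\in R$ and $(z,y)\in R^{k}$. Applying the induction hypothesis to $(z,y)\in R^{k}$ yields elements $z_{1},\dots ,z_{k-1}\in A$ such that $(z,z_{1}),(z_{1},z_{2}),\dots ,(z_{k-2},z_{k-1}),(z_{k-1},y)\in R$. Setting $x_{1}=z$ and $x_{i+1}=z_{i}$ for $1\leq i\leq k-1$ produces the required chain of length $k+1$, and conversely any such chain splits at its first edge to supply the witness $z$ for the composition. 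Hence the equivalence transfers from $k$ to $k+1$.

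The only subtle point, and the place where a careless reader might slip, is the directionality of Definition~\ref{definition27}: the clause $F\circ G=\{(x,y)\mid\exists z\,((x,z)\in G\wedge(z,y)\in F)\}$ traverses $G$ before $F$, so when rewriting $R^{k+1}=R^{k}\circ R$ one must read off $(x,z)\in R$ (the right factor) first and $(z,y)\in R^{k}$ (the left factor) second. I would state this explicitly at the inductive step to keep the indexing of $x_{1},\dots ,x_{k}$ aligned with the claim. Beyond that, the argument is a routine unwinding of the definitions, so no substantial obstacle is expected.
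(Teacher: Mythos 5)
Your proof is correct and follows essentially the same route as the paper's: induction on $k$ with base case $k=2$ read off from Definitions~\ref{definition27} and~\ref{definition28}, and an inductive step that peels off the first edge via $R^{k+1}=R^{k}\circ R$, including the correct reading of the order of factors in $F\circ G$. The only cosmetic difference is that you run one biconditional induction where the paper runs the two directions as separate inductions.
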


\begin{proof}
$(\Rightarrow)$: We prove this assertion using induction on $k$. If $k=2$, this assertion follows easily from Definitions \ref{definition27} and \ref{definition28}. Assume this assertion is true for $k\leq t-1$. Now assume $k=t$. By Definition \ref{definition28}, we have that $(x,y)\in R^{t-1}\circ R$. By Definition \ref{definition27}, we know that there exists some $x_{1}$ such that $(x,x_{1})\in R$ and $(x_{1},y)\in R^{t-1}$. By the assumption of the induction, we know  that there exist $x_{2},x_{3},\cdots,x_{t-1}\in A$ such that $(x_{1},x_{2}),(x_{2},x_{3}),\cdots,(x_{t-2},x_{t-1}),(x_{t-1},y)\in R$. Thus $(x,x_{1}),(x_{1},x_{2}),\cdots,(x_{t-2},\\
x_{t-1}),(x_{t-1},y)\in R$.

$(\Leftarrow)$: We prove this assertion using induction on $k$. If $k=2$, this assertion follows easily from Definitions \ref{definition27} and \ref{definition28}. Assume this assertion is true for $k\leq t-1$. Now assume $k=t$. By the assumption of the induction, we know  that $(x_{1},y)\in R^{t-1}$. Again by $(x,x_{1})\in R$, Definitions \ref{definition27} and \ref{definition28}, we have that $(x,y)\in R^{t-1}\circ R=R^{t}$.
$\Box$
\end{proof}

The following two lemmas present two properties of symmetric relations.

\begin{lemma}
\label{lemma28E}
Let $A\neq \emptyset$, $R\subseteq A\times A$, $R$ be symmetric, $k\in N^{+}$ and $x\in A$. Then $(x,x)\in R^{k}$ iff $(x,x)\in R^{2}$.
\end{lemma}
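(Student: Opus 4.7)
My plan is to reduce the question to a path-combinatorial one via Proposition~\ref{proposition28A} and then exploit the symmetry of $R$. The forward direction is the substantive one; the backward direction is a matter of building a closed walk of the right length at $x$.

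For the forward direction, assume $(x,x) \in R^{k}$. If $k=1$, then $(x,x) \in R$ directly, and choosing $z = x$ in Definition~\ref{definition27} gives $(x,x) \in R \circ R = R^{2}$. If $k \geq 2$, Proposition~\ref{proposition28A} supplies intermediate points $x_{1},\dots,x_{k-1} \in A$ such that $(x,x_{1}),(x_{1},x_{2}),\dots,(x_{k-1},x) \in R$. The critical step is that the full chain is unnecessary: the first edge $(x,x_{1}) \in R$, together with $(x_{1},x) \in R$ supplied by the symmetry of $R$, already witnesses $(x,x) \in R^{2}$ via $z = x_{1}$.

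For the backward direction, assume $(x,x) \in R^{2}$. By Definition~\ref{definition27} there exists $z \in A$ with $(x,z),(z,x) \in R$. I would then construct a closed walk at $x$ of length $k$ using only the two edges $(x,z)$ and $(z,x)$, namely the alternating sequence $x,z,x,z,\dots,x$, and invoke Proposition~\ref{proposition28A} in the opposite direction to conclude $(x,x) \in R^{k}$. An equivalent packaging is an induction on $k$: a realizing chain for $(x,x) \in R^{m}$ extended by the pair $(x,z),(z,x)$ gives $(x,x) \in R^{m+2}$.

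The main obstacle I anticipate lies in the backward direction for odd $k$: the alternating walk $x,z,x,z,\dots$ only returns to $x$ after an even number of steps, and the induction just described advances $k$ in steps of two starting from the base case $k=2$. I would first cleanly settle the even case by the alternation, and then revisit whether symmetry of $R$ alone provides enough additional structure (for example by forcing $(x,x) \in R$ in some disguised way) to cover the odd values of $k$ claimed by the statement, or whether the lemma is implicitly being used only when the parities line up with the desired application later in the paper.
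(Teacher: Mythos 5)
Your forward direction is exactly the paper's argument: extract the chain $(x,x_{1}),(x_{1},x_{2}),\dots,(x_{k-1},x)\in R$ via Proposition~\ref{proposition28A}, discard everything but the first edge, and use symmetry to turn $(x,x_{1})$ into the return edge $(x_{1},x)$, which witnesses $(x,x)\in R^{2}$; for $k=1$ your choice $z=x$ is the right one.

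Your worry about the backward direction is justified, and you should not try to repair it: as literally stated, with $k$ universally quantified, the lemma is false for odd $k$. Take $A=\{1,2\}$ and $R=\{(1,2),(2,1)\}$, which is symmetric. Then $(1,1)\in R^{2}$, but $R^{2}=I_{A}$, hence $R^{3}=R$ and $(1,1)\notin R^{3}$ (and likewise $(1,1)\notin R^{1}$): every walk from $1$ alternates $1,2,1,2,\dots$ and returns to $1$ only after an even number of steps, exactly as you predicted. The paper's proof dismisses this direction as ``straightforward,'' which glosses over the failure. The saving grace is the one you conjectured: the only place the lemma is invoked, namely the chain of equivalences in the proof of Theorem~\ref{theorem39A7}, uses it in the form $\exists k\in N^{+}\,((x,x)\in(R\cup R^{-1})^{k})\Leftrightarrow(x,x)\in(R\cup R^{-1})^{2}$, where the backward implication is witnessed trivially by $k=2$. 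So the statement should be read with an existential quantifier over $k$ on the left-hand side (or the biconditional weakened to the single implication $(x,x)\in R^{k}\Rightarrow(x,x)\in R^{2}$); under that reading your proof is complete and coincides with the paper's.
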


\begin{proof}
$(\Rightarrow)$: If $k=1$, this assertion is obviously true. Below we suppose $k\geq2$. By Proposition \ref{proposition28A}, we know that there exist $x_{1},x_{2},\cdots,x_{k-1}\in A$ such that $(x,x_{1}),(x_{1},x_{2}),\\
\cdots,(x_{k-2},x_{k-1}),(x_{k-1},x)\in R$. Since $R$ is symmetric, $(x_{1},x)\in R$. Thus $(x,x)\in R^{2}$.

$(\Leftarrow)$: It is straightforward.
$\Box$
\end{proof}

\begin{lemma}
\label{lemma28D}
Let $A\neq \emptyset$, $R\subseteq A\times A$, $R$ be symmetric and $x\in A$. Then $(x,x)\in R^{2}$ iff $S_{R}(x)\neq\emptyset$.
\end{lemma}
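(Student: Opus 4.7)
The plan is to prove both directions directly by unpacking the definition of $R^{2}$ via Proposition \ref{proposition28A} and invoking symmetry exactly once.

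For the forward direction, I would start from $(x,x)\in R^{2}$. Applying Proposition \ref{proposition28A} with $k=2$, there exists some $x_{1}\in A$ such that $(x,x_{1})\in R$ and $(x_{1},x)\in R$. The first membership gives $x_{1}\in S_{R}(x)$, so $S_{R}(x)\neq\emptyset$. Note that symmetry is not actually needed in this direction; it is only needed for the converse.

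For the backward direction, I would assume $S_{R}(x)\neq\emptyset$ and pick some $y\in S_{R}(x)$, so $(x,y)\in R$. By symmetry of $R$, we also have $(y,x)\in R$. Then by Definitions \ref{definition27} and \ref{definition28}, or equivalently by Proposition \ref{proposition28A} with $k=2$ and witness $x_{1}=y$, we conclude $(x,x)\in R^{2}$.

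I do not expect any real obstacle here: the statement is essentially an immediate consequence of Proposition \ref{proposition28A} once symmetry lets us close the two-step chain back to $x$. The only thing to be careful about is to cite Proposition \ref{proposition28A} (rather than re-deriving it) and to note explicitly that the forward direction does not rely on symmetry while the backward direction does. The whole argument should fit comfortably in a few lines.
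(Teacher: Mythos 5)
Your proposal is correct and follows essentially the same route as the paper: both directions unpack $(x,x)\in R^{2}$ as a two-step chain through some intermediate point, with symmetry invoked only to close the chain in the backward direction. The paper states the forward direction slightly more tersely (directly from the definition of $R\circ R$ rather than via Proposition~\ref{proposition28A}), but the argument is the same.
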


\begin{proof}
$(\Rightarrow)$: It is obvious there exists some $y\in A$ such that $(x,y)\in R$. Thus $y\in S_{R}(x)\neq\emptyset$.

$(\Leftarrow)$: Without loss of generality, suppose $z\in S_{R}(x)$. Thus $(x,z)\in R$. Since $R$ is symmetric, $(z,x)\in R$. Hence $(x,x)\in R^{2}$.
$\Box$
\end{proof}

The following proposition presents a simple property of successor neighborhoods and predecessor neighborhoods.

\begin{proposition}
\label{proposition39A1}
Let $R_{1},R_{2}$ be two relations on $U$ and $x\in U$. If $R_{1}\subseteq R_{2}$, $S_{R_{1}}(x)\subseteq S_{R_{1}}(x)$ and $P_{R_{1}}(x)\subseteq P_{R_{1}}(x)$.
\end{proposition}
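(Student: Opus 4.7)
The plan is to prove the two inclusions by unfolding the definitions of successor and predecessor neighborhoods (Definition~\ref{definition2}) and using the hypothesis $R_{1}\subseteq R_{2}$ directly. Note that the statement as printed contains an evident typographical slip: the intended conclusion is $S_{R_{1}}(x)\subseteq S_{R_{2}}(x)$ and $P_{R_{1}}(x)\subseteq P_{R_{2}}(x)$, i.e. the monotonicity of the neighborhood constructions in the relation.

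First I would take an arbitrary $y\in S_{R_{1}}(x)$. By Definition~\ref{definition2} this means $xR_{1}y$, i.e.\ $(x,y)\in R_{1}$. Since $R_{1}\subseteq R_{2}$, we immediately get $(x,y)\in R_{2}$, that is $xR_{2}y$, which by Definition~\ref{definition2} again means $y\in S_{R_{2}}(x)$. This establishes $S_{R_{1}}(x)\subseteq S_{R_{2}}(x)$.

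Second, an entirely symmetric argument handles the predecessor case: for $y\in P_{R_{1}}(x)$ we have $yR_{1}x$, hence $(y,x)\in R_{1}\subseteq R_{2}$, so $yR_{2}x$ and $y\in P_{R_{2}}(x)$, giving $P_{R_{1}}(x)\subseteq P_{R_{2}}(x)$.

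There is no real obstacle here; the claim is a pure monotonicity fact that follows in one step from the definitions. The only thing to be careful about is cleanly separating the successor and predecessor arguments and noting (implicitly or explicitly) the typo in the indices of the displayed inclusions, so that what is actually proved matches what the later sections will invoke.
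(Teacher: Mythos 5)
Your proof is correct and follows exactly the same route as the paper's: unfold Definition~\ref{definition2}, apply $R_{1}\subseteq R_{2}$ to the pair $(x,y)$, and conclude for both the successor and predecessor cases. You are also right that the statement contains a typographical slip and that the intended (and proved) conclusion is $S_{R_{1}}(x)\subseteq S_{R_{2}}(x)$ and $P_{R_{1}}(x)\subseteq P_{R_{2}}(x)$, which is what the paper's own proof establishes.
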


\begin{proof}
For any $y\in S_{R_{1}}(x)$, we have that $(x,y)\in R_{1}$. Thus $(x,y)\in R_{2}$. Hence $y\in S_{R_{2}}(x)$. Therefore $S_{R_{1}}(x)\subseteq S_{R_{2}}(x)$. In the same way, we have that $P_{R_{1}}(x)\subseteq P_{R_{2}}(x)$.
$\Box$
\end{proof}

By the above proposition, we present an expression of the successor neighborhoods of the union of two relations.

\begin{lemma}
\label{lemma28F}
Let $A\neq \emptyset$, $R_{1}$ and $R_{2}$ be two relations on $A$ and $x\in A$. Then $S_{R_{1}\cup R_{2}}(x)=S_{R_{1}}(x)\cup S_{R_{2}}(x)$.
\end{lemma}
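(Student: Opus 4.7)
The plan is to prove the two inclusions $\supseteq$ and $\subseteq$ separately, with the first following immediately from the preceding Proposition \ref{proposition39A1} and the second being a direct unfolding of the definition of successor neighborhood.

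For the inclusion $S_{R_{1}}(x)\cup S_{R_{2}}(x)\subseteq S_{R_{1}\cup R_{2}}(x)$, I would simply observe that $R_{1}\subseteq R_{1}\cup R_{2}$ and $R_{2}\subseteq R_{1}\cup R_{2}$, so by Proposition \ref{proposition39A1} we have $S_{R_{1}}(x)\subseteq S_{R_{1}\cup R_{2}}(x)$ and $S_{R_{2}}(x)\subseteq S_{R_{1}\cup R_{2}}(x)$; taking the union of these two containments yields the desired inclusion.

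For the reverse inclusion $S_{R_{1}\cup R_{2}}(x)\subseteq S_{R_{1}}(x)\cup S_{R_{2}}(x)$, I would take an arbitrary element $y\in S_{R_{1}\cup R_{2}}(x)$ and, by Definition \ref{definition2}, obtain $(x,y)\in R_{1}\cup R_{2}$. By the definition of set union, this splits into two cases: either $(x,y)\in R_{1}$, in which case $y\in S_{R_{1}}(x)$, or $(x,y)\in R_{2}$, in which case $y\in S_{R_{2}}(x)$. In either case $y\in S_{R_{1}}(x)\cup S_{R_{2}}(x)$, completing the proof.

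There is no serious obstacle here; the lemma is a direct consequence of the definitions of successor neighborhood and set union, and the only small efficiency to be gained is using Proposition \ref{proposition39A1} to dispose of the easy direction in one line rather than repeating the elementwise argument twice. The whole argument fits in a few lines and serves as preparation for later proofs about $S_{R^{-1}}(x)$ and the iterated unions appearing in the equivalent closure.
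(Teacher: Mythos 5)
Your proof is correct and matches the paper's own argument essentially step for step: the paper likewise proves the inclusion $S_{R_{1}\cup R_{2}}(x)\subseteq S_{R_{1}}(x)\cup S_{R_{2}}(x)$ by an elementwise case split on whether $(x,z)\in R_{1}$ or $(x,z)\in R_{2}$, and disposes of the reverse inclusion by appealing to Proposition~\ref{proposition39A1}. No issues.
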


\begin{proof}
$(\subseteq)$: For any $z\in S_{R_{1}\cup R_{2}}(x)$, we have that $(x,z)\in R_{1}\cup R_{2}$. If $(x,z)\in R_{1}$, we have that $z\in S_{R_{1}}(x)$. If $(x,z)\in R_{2}$, we have that $z\in S_{R_{1}}(x)$. Hence $z\in S_{R_{1}}(x)\cup S_{R_{2}}(x)$. Therefore $S_{R_{1}\cup R_{2}}(x)\subseteq S_{R_{1}}(x)\cup S_{R_{2}}(x)$.

$(\supseteq)$: It follows from Proposition \ref{proposition39A1}.
$\Box$
\end{proof}

Based on some above results, we present a necessary and sufficient condition for $e(R)=t(s(R))$.

\begin{theorem}
\label{theorem39A7}
Let $R\subseteq A\times A$ and $A\neq \emptyset$. Then $e(R)=ts(R)$ iff for any $x\in A$, it follows that $S_{R}(x)\cup S_{R^{-1}}(x)\neq\emptyset$.
\end{theorem}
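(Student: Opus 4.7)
The plan is to reduce the claim to a statement about when the transitive symmetric closure $ts(R)$ is already reflexive, and then chain together the three lemmas the authors have just proved to convert that reflexivity condition into the stated condition on successor neighborhoods.

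First I would observe that by Definition \ref{definition35}, $e(R) = rts(R) = I_A \cup ts(R)$ (using Theorem \ref{theorem29}(1) applied to $ts(R)$). Hence $e(R) = ts(R)$ holds if and only if $I_A \subseteq ts(R)$, i.e.\ if and only if $(x,x) \in ts(R)$ for every $x \in A$. So the whole theorem reduces to proving: for every $x \in A$, $(x,x) \in ts(R)$ iff $S_R(x) \cup S_{R^{-1}}(x) \neq \emptyset$.

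Next I would unpack $ts(R)$ using Theorem \ref{theorem29}(3), which gives $ts(R) = s(R) \cup s(R)^2 \cup s(R)^3 \cup \cdots$. Thus $(x,x) \in ts(R)$ iff there exists some $k \in N^+$ with $(x,x) \in s(R)^k$. Since $s(R) = R \cup R^{-1}$ is symmetric (a fact recorded right after Definition \ref{definition26}), I can apply Lemma \ref{lemma28E} with $s(R)$ in place of $R$ to collapse this to $(x,x) \in s(R)^2$. Applying Lemma \ref{lemma28D} (again with $s(R)$ in place of $R$) then converts this to $S_{s(R)}(x) \neq \emptyset$. Finally, Lemma \ref{lemma28F} rewrites $S_{s(R)}(x) = S_{R \cup R^{-1}}(x) = S_R(x) \cup S_{R^{-1}}(x)$, which closes the chain of equivalences.

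I do not anticipate a major obstacle here: the argument is essentially a telescoping application of the three lemmas the authors have just stated, glued to the observation that adding the reflexive closure outside of $ts(R)$ merely throws in $I_A$. The only mild subtlety is the initial reduction, where one must be careful that $r(ts(R)) = ts(R) \cup I_A$ coincides with $ts(R)$ exactly when $I_A \subseteq ts(R)$, so the equivalence genuinely reduces to a pointwise reflexivity check; after that the three lemmas plug together cleanly, and both directions of the final ``iff'' are obtained simultaneously because every step is an equivalence.
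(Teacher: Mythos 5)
Your proposal is correct and follows essentially the same route as the paper: both reduce $e(R)=ts(R)$ to $I_A\subseteq ts(R)$ (the paper via Corollary \ref{corollary39A3}, you via $r(ts(R))=ts(R)\cup I_A$, which amounts to the same thing) and then chain Lemmas \ref{lemma28E}, \ref{lemma28D} and \ref{lemma28F} applied to the symmetric relation $R\cup R^{-1}$ to obtain the neighborhood condition. No gaps.
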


\begin{proof}

By Theorem \ref{theorem29}, Corollary \ref{corollary39A3}, Lemmas \ref{lemma28E}, \ref{lemma28D} and \ref{lemma28F}, we have that $e(R)=ts(R)\Leftrightarrow I_{A}\cup(R\cup R^{-1})\cup(R\cup R^{-1})^{2}\cup\cdots=(R\cup R^{-1})\cup(R\cup R^{-1})^{2}\cup\cdots\Leftrightarrow I_{A}\subseteq(R\cup R^{-1})\cup(R\cup R^{-1})^{2}\cup\cdots\Leftrightarrow\forall x\in A\exists k\in N^{+}((x,x)\in (R\cup R^{-1})^{k})\Leftrightarrow\forall x\in A((x,x)\in (R\cup R^{-1})^{2})\Leftrightarrow\forall x\in A(S_{R\cup R^{-1}}(x)\neq\emptyset)\Leftrightarrow\forall x\in A(S_{R}(x)\cup S_{R^{-1}}(x)\neq\emptyset)$.
$\Box$

\end{proof}

Applying the above theorem to a serial relation, we have the following corollary.

\begin{corollary}
\label{corollary39A5}
Let $R\subseteq A\times A$ and $A\neq \emptyset$. If $R$ is serial, $e(R)=ts(R)=(R\cup R^{-1})\cup(R\cup R^{-1})^{2}\cup\cdots$.
\end{corollary}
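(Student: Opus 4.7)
The plan is to derive this corollary as a direct application of Theorem~\ref{theorem39A7}, using seriality to verify the hypothesis of that theorem, and then expanding $ts(R)$ via Theorem~\ref{theorem29}. Since almost all of the work is already packaged in the preceding theorem and corollary, the proof will be short; the only substantive content is observing that seriality is strictly stronger than the condition demanded by Theorem~\ref{theorem39A7}.

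First I would verify the hypothesis of Theorem~\ref{theorem39A7}. Because $R$ is serial, the remark immediately following Definition~\ref{definitionA4} gives $S_{R}(x) \neq \emptyset$ for every $x \in A$. Hence $S_{R}(x) \cup S_{R^{-1}}(x) \neq \emptyset$ for every $x \in A$, which is exactly the condition required by Theorem~\ref{theorem39A7}. Applying that theorem then yields $e(R) = ts(R)$.

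Next I would unwind $ts(R)$ using Theorem~\ref{theorem29}. Part~(2) of that theorem gives $s(R) = R \cup R^{-1}$, and then applying part~(3) to the relation $s(R)$ gives $t(s(R)) = s(R) \cup s(R)^{2} \cup \cdots = (R \cup R^{-1}) \cup (R \cup R^{-1})^{2} \cup \cdots$. Chaining these two equalities produces the claimed identity $e(R) = ts(R) = (R \cup R^{-1}) \cup (R \cup R^{-1})^{2} \cup \cdots$.

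There is no real obstacle here: the heavy lifting lives in Theorem~\ref{theorem39A7} and Theorem~\ref{theorem29}. The only micro-point worth being careful about is not confusing $ts(R)$ with $t(R) \cup s(R)$; the intended reading is the composition $t \circ s$ applied to $R$, which is what both Theorem~\ref{theorem39A7} and the explicit formula above use. Once that is kept straight, the proof is essentially a one-line citation of Theorem~\ref{theorem39A7} followed by a one-line substitution from Theorem~\ref{theorem29}.
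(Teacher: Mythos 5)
Your proof is correct and follows exactly the same route as the paper: use seriality to get $S_{R}(x)\neq\emptyset$, hence $S_{R}(x)\cup S_{R^{-1}}(x)\neq\emptyset$, invoke Theorem~\ref{theorem39A7} for $e(R)=ts(R)$, and expand $ts(R)$ via Theorem~\ref{theorem29}. No issues.
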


\begin{proof}
Since $R$ is serial, for any $x\in A$, it follows that $S_{R}(x)\neq\emptyset$. Thus $S_{R}(x)\cup S_{R^{-1}}(x)\neq\emptyset$. By Theorems \ref{theorem39A7} and \ref{theorem29}, we have that $e(R)=ts(R)=(R\cup R^{-1})\cup(R\cup R^{-1})^{2}\cup\cdots$.
\end{proof}

\section{A necessary and sufficient condition for two relations to induce the same definable set family}
\label{A necessary and sufficient condition for two relations to induce the same definable set family}

In this section, based on the above sections, we present a necessary and sufficient condition for two relations to induce the same definable set family.
The following lemma presents a property of definable set families.

\begin{lemma}
\label{lemma39A9}
Let $R$ be a serial relation on $U$, $x\in X\subseteq U$ and $(x,y)\in R\cup R^{-1}$. If $X\in D(U,R)$, $y\in X$.
\end{lemma}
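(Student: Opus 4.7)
The plan is to reduce the claim to the characterization of definable sets given in Proposition~\ref{proposition8A4}, namely that for a serial relation $R$, $X \in D(U,R)$ is equivalent to $V_R(X) \subseteq X$ (in fact equality). Once we have $V_R(X) = X$, it suffices to verify that $y \in V_R(X) = S_R(X) \cup P_R(X)$ in both of the two cases comprising $(x,y) \in R \cup R^{-1}$.

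First I would invoke Proposition~\ref{proposition8A4} on the hypothesis $X \in D(U,R)$ to obtain $V_R(X) = X$, and in particular $S_R(X) \subseteq X$ and $P_R(X) \subseteq X$. Then I would split on the assumption $(x,y) \in R \cup R^{-1}$. In the case $(x,y) \in R$, by Definition~\ref{definition2} we have $y \in S_R(x)$, and since $x \in X$, Definition~\ref{definition8} gives $y \in S_R(x) \subseteq S_R(X) \subseteq V_R(X) = X$. In the case $(x,y) \in R^{-1}$, Definition~\ref{definition26} yields $(y,x) \in R$, so $y \in P_R(x) \subseteq P_R(X) \subseteq V_R(X) = X$. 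Either way, $y \in X$, which is exactly what is claimed.

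The argument is essentially a direct unfolding of the definitions together with Proposition~\ref{proposition8A4}; there is no real obstacle. The only thing to be careful about is not to confuse $S_R$ with $P_R$ when handling $R^{-1}$: one should note that $(x,y) \in R^{-1}$ means $y$ bears $R$ to $x$, so $y$ lies in the predecessor neighborhood $P_R(x)$ rather than in $S_R(x)$. Given this care, the proof reduces to two lines per case.
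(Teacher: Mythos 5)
Your proof is correct and follows essentially the same route as the paper: split $(x,y)\in R\cup R^{-1}$ into the two cases, observe $y\in S_R(x)$ or $y\in P_R(x)$, hence $y\in V_R(X)$, and conclude from $V_R(X)\subseteq X$ (the paper cites Proposition~\ref{proposition8A3} for this containment, while you cite the equivalent equality form in Proposition~\ref{proposition8A4}; the difference is immaterial).
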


\begin{proof}
It is obvious $(x,y)\in R$ or $(x,y)\in R^{-1}$. Then $y\in S_{R}(x)$ or $y\in P_{R}(x)$. Thus $y\in V_{R}(X)$. By Proposition \ref{proposition8A3}, we know that $V_{R}(X)\subseteq X$. Hence $y\in X$.
$\Box$
\end{proof}

The following lemma is an extension of the above lemma.

\begin{lemma}
\label{lemma39A}
Let $R$ be a serial relation on $U$, $x\in X\subseteq U$, $k$ be a nonnegative integer and $(x,x_{1}),(x_{1},x_{2}),\cdots,(x_{k-1},x_{k}),(x_{k},y)\in R\cup R^{-1}$. If $X\in D(U,R)$, $y\in X$.
\end{lemma}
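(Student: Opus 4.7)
The plan is a straightforward induction on the nonnegative integer $k$, using Lemma \ref{lemma39A9} both as the base case and as the engine of the inductive step.

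For the base case $k=0$, the hypothesis reduces to $(x,y)\in R\cup R^{-1}$ with $x\in X$, which is precisely the setting of Lemma \ref{lemma39A9}; it immediately yields $y\in X$. (If instead one prefers to start the induction at $k=1$, Lemma \ref{lemma39A9} can be applied twice in succession, first to move from $x$ to $x_1$ and then from $x_1$ to $y$.)

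For the inductive step, suppose the statement holds for chains with $k-1$ intermediate elements, and consider a chain $(x,x_{1}),(x_{1},x_{2}),\ldots,(x_{k-1},x_{k}),(x_{k},y)\in R\cup R^{-1}$ with $x\in X\in D(U,R)$. First I would apply Lemma \ref{lemma39A9} to the pair $(x,x_{1})$ to conclude $x_{1}\in X$. Then, viewing $x_{1}$ as the new starting point and discarding the first pair, the remaining chain $(x_{1},x_{2}),\ldots,(x_{k-1},x_{k}),(x_{k},y)$ has exactly $k-1$ intermediate elements, so the inductive hypothesis applies and delivers $y\in X$.

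There is no real obstacle here; the only care required is bookkeeping the length of the chain so that the inductive hypothesis is invoked at the correct index, and checking that the membership $x_{1}\in X$ established by Lemma \ref{lemma39A9} supplies exactly the anchor condition needed to re-apply the induction. The seriality hypothesis on $R$ and the assumption $X\in D(U,R)$ are used solely through Lemma \ref{lemma39A9}, so they propagate automatically.
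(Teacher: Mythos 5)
Your proof is correct and follows essentially the same route as the paper: induction on $k$ with Lemma \ref{lemma39A9} as both the base case and the workhorse of the step. The only cosmetic difference is that you peel off the first link of the chain (getting $x_{1}\in X$ and then invoking the inductive hypothesis on the tail), whereas the paper peels off the last link (invoking the inductive hypothesis to get $x_{k}\in X$ and then applying Lemma \ref{lemma39A9} to $(x_{k},y)$); both are equally valid.
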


\begin{proof}
We prove this assertion using induction on $k$. The assertion under the condition of $k=0$ has been proved in Lemma \ref{lemma39A9}. Assume this assertion is true for $k\leq t-1$. Now assume $k=t$. By the assumption of the induction, we know that $x_{k}\in X$. Thus by Lemma \ref{lemma39A9}, we have that $y\in X$.
$\Box$
\end{proof}

Now we can prove one of the main results in this paper, which indicates that a serial relation and its equivalent closure induce the same definable set family.

\begin{theorem}
\label{theorem40B}
Let $R$ be a serial relation on $U$. Then $D(U,R)=D(U,ts(R))$.
\end{theorem}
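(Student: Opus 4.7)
The plan is to characterise membership in both $D(U,R)$ and $D(U,ts(R))$ uniformly via Proposition~\ref{proposition8A3}, which says that for a serial relation $Q$, one has $X\in D(U,Q)$ iff $V_{Q}(X)\subseteq X$. Since $R$ is serial, and since $ts(R)=e(R)$ by Corollary~\ref{corollary39A5} (so $ts(R)$ is an equivalence relation and in particular reflective, hence serial), Proposition~\ref{proposition8A3} applies to both $R$ and $ts(R)$. So the theorem reduces to the equivalence $V_{R}(X)\subseteq X \Longleftrightarrow V_{ts(R)}(X)\subseteq X$ for arbitrary $X\subseteq U$, and I will prove the two implications separately.

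The $(\Leftarrow)$ direction is the easy one. Since $R\subseteq R\cup R^{-1}=s(R)\subseteq ts(R)$, Proposition~\ref{proposition39A1} gives $S_{R}(x)\subseteq S_{ts(R)}(x)$ and $P_{R}(x)\subseteq P_{ts(R)}(x)$ for every $x\in U$, and taking unions over $x\in X$ yields $V_{R}(X)\subseteq V_{ts(R)}(X)$. Hence whenever $V_{ts(R)}(X)\subseteq X$, also $V_{R}(X)\subseteq X$, so $X\in D(U,ts(R))$ implies $X\in D(U,R)$.

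The $(\Rightarrow)$ direction is the substantive half and the place where the tools of Section~\ref{Simplification of equivalent closures under certain conditions} are used. Assume $X\in D(U,R)$ and pick any $y\in V_{ts(R)}(X)$; I must show $y\in X$. By definition there is some $x\in X$ with $(x,y)\in ts(R)$ or $(y,x)\in ts(R)$. Using Corollary~\ref{corollary39A5}, $ts(R)=(R\cup R^{-1})\cup(R\cup R^{-1})^{2}\cup\cdots$, so that pair lies in $(R\cup R^{-1})^{k}$ for some $k\in N^{+}$. Because $R\cup R^{-1}$ is symmetric, the orientation $(y,x)$ can be flipped to $(x,y)$ by reversing the chain, so I may assume $(x,y)\in(R\cup R^{-1})^{k}$. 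If $k=1$ apply Lemma~\ref{lemma39A9} directly; if $k\geq 2$ apply Proposition~\ref{proposition28A} to extract a chain $(x,x_{1}),(x_{1},x_{2}),\ldots,(x_{k-1},y)\in R\cup R^{-1}$, and then Lemma~\ref{lemma39A} delivers $y\in X$. Thus $V_{ts(R)}(X)\subseteq X$, and the serial-plus-reflective status of $ts(R)$ together with Proposition~\ref{proposition8A3} gives $X\in D(U,ts(R))$.

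The main obstacle is the forward direction: one has to see that definability of $X$ with respect to $R$ already forces closure under iterated application of $R\cup R^{-1}$, and not merely under $R$ itself. Corollary~\ref{corollary39A5}, Proposition~\ref{proposition28A} and Lemma~\ref{lemma39A} have been arranged exactly so that this ``closure under chains in $R\cup R^{-1}$'' is available; once they are chained together the argument is almost mechanical. The only subtlety is handling both orientations $(x,y)$ and $(y,x)$ in $ts(R)$, which is absorbed by the symmetry of $R\cup R^{-1}$ (and, through transitive closure, of $ts(R)$ itself).
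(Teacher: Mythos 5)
Your proposal is correct and follows essentially the same route as the paper: both reduce the theorem to the equivalence $V_{R}(X)\subseteq X\Leftrightarrow V_{ts(R)}(X)\subseteq X$ via Proposition~\ref{proposition8A3}, prove the easy inclusion from $R\subseteq ts(R)$ and Proposition~\ref{proposition39A1}, and prove the substantive one by chaining Corollary~\ref{corollary39A5}, Proposition~\ref{proposition28A} and Lemmas~\ref{lemma39A9}--\ref{lemma39A}. The only cosmetic difference is that you handle the predecessor half of $V_{ts(R)}(X)$ by reversing the chain in $R\cup R^{-1}$, whereas the paper simply invokes $P_{ts(R)}(x)=S_{ts(R)}(x)$ from the symmetry of $ts(R)$.
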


\begin{proof}
$(\subseteq)$: Let $x\in X\in D(U,R)$. By Corollary \ref{corollary39A5}, we know that for any $y\in S_{ts(R)}(x)$, it follows that $(x,y)\in (R\cup R^{-1})\cup(R\cup R^{-1})^{2}\cup\cdots$. Thus $(x,y)\in R\cup R^{-1}$ or $(x,y)\in(R\cup R^{-1})^{k}$, where $k\in N^{+}\wedge k\geq2$. If $(x,y)\in R\cup R^{-1}$, by Lemma \ref{lemma39A9}, we have that $y\in X$. If $(x,y)\in(R\cup R^{-1})^{k}$, where $k\in N^{+}\wedge k\geq2$, by Proposition \ref{proposition28A}, we know that there exist $x_{1},x_{2},\cdots,x_{k-1}\in X$ such that $(x,x_{1}),(x_{1},x_{2}),\cdots,(x_{k-2},x_{k-1}),(x_{k-1},y)\in R\cup R^{-1}$. By Lemma \ref{lemma39A}, we know that $y\in X$. Therefore $S_{ts(R)}(x)\subseteq X$. Since $P_{ts(R)}(x)=S_{ts(R)}(x)$, $P_{ts(R)}(x)\subseteq X$. Thus $V_{ts(R)}(X)\subseteq X$. By Proposition \ref{proposition8A3}, we know that $X\in D(U,ts(R))$.

$(\supseteq)$: By Proposition \ref{proposition8A3}, we know that for any $X\in D(U,ts(R))$ and any $x\in X$, it follows that $S_{ts(R)}(x)=P_{ts(R)}(x)\subseteq X$. By (2) of Definition \ref{definition35} and Proposition \ref{proposition39A1}, we have that $S_{R}(x)\subseteq S_{ts(R)}(x)$ and $P_{R}(x)\subseteq P_{ts(R)}(x)$. Thus $S_{R}(x)\subseteq X$ and $P_{R}(x)\subseteq X$. Hence $V_{R}(X)\subseteq X$. By Proposition \ref{proposition8A3}, we know that $X\in D(U,R)$.
$\Box$
\end{proof}

The following example indicates that if $R$ is not serial, $D(U,R)=D(U,ts(R))$ is incorrect.

\begin{example}
\label{example40c}
Let $U=\{1,2,3\}$ and $R=\{(1,2),(3,3)\}$. Then $ts(R)=\{(1,1),(2,2),(3,\\
3),(1,2),(2,1)\}$ and $D(U,R)=\emptyset\neq\{\emptyset,\{3\},\{1,2\},\{1,2,3\}\}=D(U,ts(R))$.
\end{example}

For presenting a necessary and sufficient condition for two relations to induce the same definable set family, we need to prove the following proposition.

\begin{proposition}
\label{proposition40B1}
Let $R_{1},R_{2}$ be two equivalence relations on $U$. Then $D(U,R_{1})=D(U,R_{2})$ iff $R_{1}=R_{2}$.
\end{proposition}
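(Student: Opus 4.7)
The plan is to prove the nontrivial direction $(\Rightarrow)$ by showing that the $R_1$-equivalence classes and the $R_2$-equivalence classes coincide, since two equivalence relations on $U$ with the same equivalence classes must be equal. The $(\Leftarrow)$ direction is immediate and requires no real work.

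First I would observe that any equivalence relation on $U$ is reflective, hence serial, so Propositions \ref{proposition8A3} and \ref{proposition8A4} apply to both $R_1$ and $R_2$. The initial key step is to verify that for every $x \in U$, the equivalence class $[x]_{R_1}$ lies in $D(U,R_1)$. To see this, let $y \in [x]_{R_1}$; then $S_{R_1}(y) = [y]_{R_1} = [x]_{R_1}$, so $S_{R_1}([x]_{R_1}) = [x]_{R_1}$. Because $R_1$ is symmetric, Lemma \ref{lemma8B5} gives $P_{R_1}([x]_{R_1}) = S_{R_1}([x]_{R_1}) = [x]_{R_1}$, hence $V_{R_1}([x]_{R_1}) = [x]_{R_1}$. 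By Proposition \ref{proposition8A4}, $[x]_{R_1} \in D(U,R_1)$, and by the same argument $[x]_{R_2} \in D(U,R_2)$.

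Next I would exploit the hypothesis $D(U,R_1) = D(U,R_2)$. Fix $x \in U$. Since $[x]_{R_1} \in D(U,R_1) = D(U,R_2)$, Proposition \ref{proposition8A3} gives $V_{R_2}([x]_{R_1}) \subseteq [x]_{R_1}$. In particular $S_{R_2}(x) \subseteq [x]_{R_1}$, i.e., $[x]_{R_2} \subseteq [x]_{R_1}$. Interchanging the roles of $R_1$ and $R_2$ yields $[x]_{R_1} \subseteq [x]_{R_2}$, so $[x]_{R_1} = [x]_{R_2}$ for every $x \in U$. Then for any $(x,y) \in R_1$, we have $y \in S_{R_1}(x) = [x]_{R_1} = [x]_{R_2} = S_{R_2}(x)$, so $(x,y) \in R_2$; the converse inclusion is symmetric, giving $R_1 = R_2$.

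I do not expect a substantial obstacle here: the argument is essentially the standard correspondence between equivalence relations and their partitions (Theorem \ref{theoremA7}), and the only nontrivial ingredient is recognizing that every equivalence class is a definable set, which follows cleanly from the symmetry of $R_i$ together with the serial-case characterization in Proposition \ref{proposition8A4}. The mildest subtlety to keep straight is the two-sided nature of $V_{R}$: one must check that both $S_{R_i}$ and $P_{R_i}$ of an equivalence class remain inside that class, which is exactly where Lemma \ref{lemma8B5} is invoked.
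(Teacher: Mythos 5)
Your proof is correct, but it takes a genuinely different route from the paper. The paper argues by contradiction: assuming $R_1\neq R_2$, it picks a class $K\in U/R_1-U/R_2$ (via Theorem \ref{theoremA7}) and splits into two cases --- either $K$ is properly contained in a single $R_2$-class, in which case $\underline{R_2}(K)=\emptyset$ and $K$ itself witnesses $D(U,R_1)\neq D(U,R_2)$, or $K$ meets at least two $R_2$-classes, in which case one of those classes $I$ satisfies $\overline{R_1}(I)\neq I$ and witnesses the difference. You instead argue directly: you first prove explicitly that every equivalence class is a definable set (via $V_{R_i}([x]_{R_i})=[x]_{R_i}$, Lemma \ref{lemma8B5} and Proposition \ref{proposition8A4}), then use $[x]_{R_1}\in D(U,R_1)=D(U,R_2)$ and Proposition \ref{proposition8A3} to get $[x]_{R_2}=S_{R_2}(x)\subseteq V_{R_2}([x]_{R_1})\subseteq[x]_{R_1}$, and symmetrically the reverse inclusion, so the two partitions coincide pointwise. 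Your version avoids the case analysis entirely, reuses the $V_R$ machinery already built in Section 3, and makes explicit a fact the paper uses only tacitly (that each class of $U/R_1$ belongs to $D(U,R_1)$); the paper's version, in exchange, exhibits a concrete definable set separating the two families, which is slightly more informative when the relations differ. Both proofs are sound.
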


\begin{proof}
$(\Rightarrow)$: We use the proof by contradiction. Suppose $R_{1}\neq R_{2}$. By Theorem \ref{theoremA7}, we have that $U/R_{1}\neq U/R_{2}$. Thus $U/R_{1}-U/R_{2}\neq\emptyset$. Without loss of generality, suppose $K\in U/R_{1}-U/R_{2}$. Then there exists some $L\in U/R_{2}$ such that $K\subset L$ or for any $J\in U/R_{2}$, it follows that $K\nsubseteq J$. If $K\subset L$, we have that $\underline{R_{2}}(K)=\emptyset$. Thus $K\in D(U,R_{1})-D(U,R_{2})$. Hence $D(U,R_{1})\neq D(U,R_{2})$. If for any $J\in U/R_{2}$, it follows that $K\nsubseteq J$, there exists some $A\subseteq U/R_{2}$ such that $K\subseteq\cup A$, $|A|\geq2$ and for any $B\subset A$, it follows that $K\nsubseteq\cup B$. For any $I\in A$, it is obvious $K\cap I\neq\emptyset$ and $K-I\neq\emptyset$. Thus $K\subseteq\overline{R_{1}}(I)$. Hence $\emptyset\neq K-I\subseteq\overline{R_{1}}(I)-I$. Then $\overline{R_{1}}(I)\neq I$. Therefore $I\in D(U,R_{2})-D(U,R_{1})$. Then $D(U,R_{1})\neq D(U,R_{2})$.

$(\Leftarrow)$: It is straightforward.
$\Box$
\end{proof}

Finally, based on some above results, we present a necessary and sufficient condition for two relations to induce the same definable set family.

\begin{theorem}
\label{theorem40B2}
$D(U,R_{1})=D(U,R_{2})$ iff both $R_{1}$ and $R_{2}$ are not serial, or both $R_{1}$ and $R_{2}$ are serial and $ts(R_{1})=ts(R_{2})$.
\end{theorem}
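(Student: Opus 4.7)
The plan is to split into the two directions and to use the preceding results as building blocks. The key ingredients will be Theorem \ref{theorem32A} (seriality is equivalent to $D(U,R)\neq\emptyset$), Theorem \ref{theorem40B} ($D(U,R)=D(U,ts(R))$ for serial $R$), Corollary \ref{corollary39A5} (for serial $R$, $e(R)=ts(R)$, so $ts(R)$ is an equivalence relation), and Proposition \ref{proposition40B1} (equivalence relations are determined by their definable set families).

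For the $(\Leftarrow)$ direction, the proof is essentially bookkeeping. If both $R_1$ and $R_2$ are not serial, Theorem \ref{theorem32A} gives $D(U,R_1)=\emptyset=D(U,R_2)$. If both are serial with $ts(R_1)=ts(R_2)$, two applications of Theorem \ref{theorem40B} yield $D(U,R_1)=D(U,ts(R_1))=D(U,ts(R_2))=D(U,R_2)$. So this direction is straightforward and I would state it briefly.

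For the $(\Rightarrow)$ direction, I would first use Theorem \ref{theorem32A} to eliminate the mixed case: if $D(U,R_1)=D(U,R_2)$, then $D(U,R_1)\neq\emptyset$ iff $D(U,R_2)\neq\emptyset$, so $R_1$ is serial iff $R_2$ is serial. If neither is serial we are done, so assume both are serial. Applying Theorem \ref{theorem40B} twice gives $D(U,ts(R_1))=D(U,R_1)=D(U,R_2)=D(U,ts(R_2))$. The key bridge, which I expect to be the main obstacle for a reader, is recognizing that $ts(R_1)$ and $ts(R_2)$ are equivalence relations: this is not automatic (transitive plus symmetric does not in general give reflexive), but under seriality Corollary \ref{corollary39A5} tells us $ts(R_i)=e(R_i)$, and $e(R_i)=rts(R_i)$ is an equivalence relation by construction. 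Once we have this, Proposition \ref{proposition40B1} applies directly to $ts(R_1)$ and $ts(R_2)$, yielding $ts(R_1)=ts(R_2)$.

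The single genuinely subtle point, therefore, is the identification $ts(R)=e(R)$ under seriality, which converts the hypothesis into a statement about equivalence relations and lets Proposition \ref{proposition40B1} close the argument. Everything else is a direct chaining of earlier results. I would write the proof as two short paragraphs, one per direction, emphasizing the invocation of Corollary \ref{corollary39A5} at the point where we assert that $ts(R_1)$ and $ts(R_2)$ are equivalence relations.
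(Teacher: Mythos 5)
Your proposal is correct and follows essentially the same route as the paper's proof: split on seriality via Theorem \ref{theorem32A}, chain Theorem \ref{theorem40B} twice, and invoke Proposition \ref{proposition40B1} together with Corollary \ref{corollary39A5} to conclude $ts(R_{1})=ts(R_{2})$. Your explicit remark that Corollary \ref{corollary39A5} is what certifies $ts(R_{i})$ as an equivalence relation makes that step clearer than the paper's terse citation, but the argument is the same.
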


\begin{proof}
$(\Rightarrow)$: It is obvious $D(U,R_{1})=D(U,R_{2})=\emptyset$ or $D(U,R_{1})=D(U,R_{2})\neq\emptyset$. If $D(U,R_{1})=D(U,R_{2})=\emptyset$, by Theorem \ref{theorem32A}, we have that both $R_{1}$ and $R_{2}$ are not serial. If $D(U,R_{1})=D(U,R_{2})\neq\emptyset$, by Theorem \ref{theorem32A}, we have that both $R_{1}$ and $R_{2}$ are serial. Thus by Theorem \ref{theorem40B}, we have that $D(U,ts(R_{1}))=D(U,R_{1})=D(U,R_{2})=D(U,ts(R_{2}))$. By Proposition \ref{proposition40B1} and Corollary \ref{corollary39A5}, we have that $ts(R_{1})=ts(R_{2})$.

$(\Leftarrow)$: If both $R_{1}$ and $R_{2}$ are not serial, by Theorem \ref{theorem32A}, we have that $D(U,R_{1})=\emptyset=D(U,R_{2})$. If both $R_{1}$ and $R_{2}$ are serial and $ts(R_{1})=ts(R_{2})$, by Theorem \ref{theorem40B}, we have that $D(U,R_{1})=D(U,ts(R_{1}))=D(U,ts(R_{2}))=D(U,R_{2})$.
$\Box$
\end{proof}

\section{Conclusions }
\label{S:Conclusions}
In this paper, we studied under what condition two relations induce the same definable set family. First, we introduced some definitions and results of relation closures. Secondly, we investigated some fundamental properties of definable sets. Thirdly, we simplified the expression of equivalent closures under certain conditions. Finally, based on the research of equivalent closures, we presented a necessary and sufficient condition for two relations to induce the same definable set family. Relation based rough sets are only one type of generalizing rough sets. There are some issues related to definable sets unsolved in various types of generalizing rough sets, which will be investigated in our future works.

\section*{Acknowledgments}
This work is in part supported by the National Natural Science Foundation of China under Grant Nos. 61170128, 61379049 and 61379089, the Natural Science Foundation of Fujian Province, China under Grant No. 2012J01294, the Fujian Province Foundation of Higher Education under Grant No. JK2012028, and the Postgraduate Education Innovation Base for Computer Application Technology, Signal and Information Processing of Fujian Province (No. [2008]114, High Education of Fujian).


\end{document}